\documentclass[letterpaper]{article} 
\usepackage[preprint]{aaai2027}  
\usepackage[hyphens]{url}  
\usepackage{graphicx} 
\usepackage{natbib}  
\usepackage{caption} 
\usepackage{booktabs}

\usepackage{amsmath, amssymb, amsthm}

\newtheorem{definition}{Definition}
\newtheorem{proposition}{Proposition}

\title{Learning Principal-Agent Contracts for Equitable Smallholder Carbon Farming under Moral Hazard and Adverse Selection}
\author{
    Rishi Bharadwaj,
    Yadati Narahari
}
\affiliations{
    Department of Computer Science and Automation, Indian Institute of Science (IISc) 
}

\begin{document}

\maketitle

\begin{abstract}
Agricultural soils are a major untapped carbon sink. Carbon farming is emerging as a promising practice for tapping this potential. Smallholder farmers, who dominate agriculture across South Asia and sub-Saharan Africa, are key to scaling climate mitigation via carbon farming. It is ironic that real-world carbon programs largely fail to reach them. We study this important gap through the lens of contract design. An aggregator offers a single pooled contract to a heterogeneous population of smallholder farmers who have private adoption costs (adverse selection) and exert unobserved effort (moral hazard), with agronomic outcomes evolving over multiple seasons. We formulate this evolving contracting problem as a POMDP and use reinforcement learning to learn a dynamic profit-maximising contract. We analyse the performance of the aggregator under various conditions. We find that a profit-maximising aggregator does not merely inherit the exclusion of smallholders, it amplifies it. On large farms the aggregator realises
87.7\% of achievable adoption, against only 8.2\% on smallholdings.
Per-hectare Measurement, Reporting and Verification (MRV) costs fall as farm size rises, and the aggregator's
pooling contract compounds this gradient rather than offsetting it.
A counterfactual that makes MRV costs purely area-proportional eliminates this disparity. Our results and simulation can guide contract and policy design that opens carbon income to smallholders while enabling agricultural soils to contribute to climate mitigation at scale.

\end{abstract}

\begin{links}
    \link{Code}{[https://github.com/Rishi-Bharadwaj/carbon-farming-contract-design]}
\end{links}

\section{Introduction}

Carbon farming offers a rare opportunity to address two global challenges simultaneously, removing atmospheric carbon while directing climate finance to vulnerable farming communities. The technical potential of soil carbon sequestration in agroecosystems is $4.4$--$11.4$ Gt CO$_2$e/yr~\citep{LAL2011}, but realised sequestration under existing carbon programs falls far short, in part because those programs do not reach smallholders. Smallholders under 2 hectares account for approximately 500 million of the world's 582 million farms~\citep{lowder_global_2025} and dominate agricultural landscapes across South Asia and sub-Saharan Africa. They sustain crop diversity that large monocultural operations erode \citep{Ricciardi}. Smallholders are therefore where the largest share of unrealised
sequestration potential and the greatest climate vulnerability
coincide. A market design that reaches them can bring a structurally excluded population into carbon markets, unlocking soil carbon mitigation at scale. Yet existing carbon-farming programs are systematically biased toward larger landholders~\citep{Barbato2023FarmerPerspectives,Cariappa2025,Johansson2025}.

Designing contracts that reach smallholders is difficult for several reasons. Adoption costs are privately known (adverse selection), effort is only partially observable (moral hazard), agronomic outcomes evolve over multiple seasons, and practical constraints require a single pooled contract to serve heterogeneous farmers. Fixed Measurement, Reporting and Verification (MRV) costs further disadvantage small farms by increasing per-hectare costs. The combination of private information, hidden effort, heterogeneous farmers, and multi-season dynamics creates a contract-design problem that is difficult to solve analytically and calls for methods that can learn adaptive policies under partial observability. Yet AI research has largely focused on firm-level carbon markets, leaving AI-driven contract design for carbon farming largely unexplored~\citep{welsh2026,wang_2024,priyanka2025carbonfarming}.

We model carbon farming as a principal-agent Stackelberg game in which an aggregator commits to a single pooled contract and farmers best respond myopically. We derive the equilibrium for the single-season game analytically. The joint presence of moral hazard and adverse selection makes optimal contract computation APX-hard in discrete settings~\citep{guruganesh_2021}. While our formulation is continuous, it further adds multi-season dynamics over a heterogeneous partially observable population. We therefore formulate a POMDP for which reinforcement learning is a standard approach. We model commercially rational aggregators because this reflects prevailing market practice, and examine how these incentives shape market participation.

Beyond learning contracts, our framework provides insight into who participates in carbon markets and why. Fixed MRV costs create a mechanical participation threshold that disadvantages smaller farms. We ask whether a profit-maximising aggregator merely inherits this threshold or amplifies it. To answer this, we benchmark against a first-best baseline computed under the same MRV cost structure and define the Realised Adoption Share (RAS) as the proportion of first-best adoption achieved by the aggregator. RAS reaches 87.7\% for large farms but only 8.2\% for smallholdings, showing exclusion well beyond what cost alone explains. We attribute this amplification to the interaction between profit maximisation and the pooling constraint. A counterfactual that equalises MRV distribution removes this disparity while increasing both farmer participation and enrolled area. Finally, we show that moral hazard can act as a cost-limiting mechanism rather than a pure efficiency loss, so introducing it alongside adverse selection need not reduce contract performance. Our framework also serves as a decision-support tool, letting stakeholders evaluate policy interventions before implementation. It can be recalibrated to other regions and market conditions.

\paragraph{Contributions.}
Learning approaches to contract design treat moral hazard and adverse
selection in isolation and assume stationary settings. Analytical
work on agricultural contracts handles both frictions but under stationary settings with a single
dimension of heterogeneity. We aim to fill this gap and aid smallholder participation. Our paper develops a conceptual framework for examining how commercially rational contract design shapes participation and distributional outcomes.
\begin{itemize}
\item We formulate commercial carbon-farming contract design as a
multi-season principal-agent Stackelberg game embedded in a POMDP. We
learn a dynamic pooled contract under joint moral hazard and adverse
selection across a population heterogeneous in adoption cost, farm
size and initial soil carbon.

\item We train a deep reinforcement-learning aggregator in this
environment and measure it against a first-best welfare benchmark.
This lets us ask not only which contracts the
aggregator learns, but which farmers those contracts end
up serving and why.

\item We introduce Realised Adoption Share (RAS), which separates
participation losses caused by the MRV cost structure from those
caused by profit maximisation under pooling.  Using RAS, we show that commercially rational aggregators amplify rather than merely inherit smallholder exclusion. We identify socialisation of MRV costs as a policy intervention that
equalises participation.
\item We release a configurable, open-source simulator that separates
agronomic, population and market parameters from the environment
implementation, so recalibration to a new region requires
configuration changes alone. It provides a reusable testbed for
evaluating carbon policy interventions before field deployment.
\end{itemize}

\section{Related Work}

\paragraph{Automated contract design.}Learning principal-agent contracts can be seen as a subfield of automated mechanism design~\citep{conitzer}, an area that has seen substantial recent progress~\citep{dutting_2019,scheid2025,wu2024,gerstgrasser2023}. For a comprehensive survey, see \citet{dutting_2024}.

Computing optimal contracts under joint frictions is difficult in general. \citet{guruganesh_2021} prove that the principal-agent problem with moral hazard and adverse selection is APX-hard in discrete combinatorial settings, for both the profit-maximizing single contract and the profit-maximizing menu of contracts.

Existing learning approaches each address only part of the problem. \citet{ivanov2024,wu2024} propose RL solutions for learning contracts, but neither considers adverse selection, while \citet{gerstgrasser2023} explore learning Stackelberg equilibria in multi-agent settings, without either friction. Conversely, \citet{bollini2026learning} consider learning in Bayesian Stackelberg games with adverse selection but without moral hazard, and \citet{scheid2025} propose algorithmic solutions for a bandit contracting setting that likewise includes only adverse selection.

\paragraph{Carbon contract design.}Carbon sequestration has likewise motivated work on contract design. \citet{ian_2012} study contracts with moral hazard and apply them to the problem of ensuring permanence in carbon sequestering activities. \citet{priyanka2025carbonfarming} provide a comprehensive survey of carbon farming that explicitly identifies contract design with AI as a promising future direction. \citet{Raina2024} examine how different incentive mechanisms for carbon farming are designed and implemented. \citet{wang_2024} use hierarchical MARL to model a government allocating credits to self-interested trading enterprises in a cap-and-trade system, while \citet{welsh2026} characterize Nash equilibria for greenhouse gas offset credit markets using Nash-DQN.

\paragraph{Most closely related work.} \citet{dai_li} and \citet{rob_2005} both study agricultural principal-agent contracts under moral hazard and adverse selection. However, they model simpler, stationary formulations with one level of heterogeneity, allowing analytical treatments. Our setting introduces heterogeneity across multiple farmer attributes, namely adoption costs, farm sizes and initial Soil Organic Carbon (SOC) concentration. More fundamentally, it adds non-stationary multi-season dynamics, which motivate our learning approach.
\section{Model}

We begin with a single aggregator and a single farmer. The farmer is risk-neutral and protected by limited liability so that all payments are non-negative. Key parameters are summarised in Table~\ref{tab:notation}.

\begin{table}[t]
\centering
\begin{tabular}{ll}
    $B$              & Set of possible carbon sequestering practices \\
    $P$              & Carbon price (\$/$tCO_2e$) \\
    $f$              & Farm size (hectares) \\
    $c_i$            & Carbon sequestered per practice $i$ ($tCO_2e$/ha)\\
    $k_i$            & Adoption cost for practice $i$ (\$/ha) \\
    $y_i$            & Profit due to yield change per practice $i$ (\$/ha) \\
    $m$              & Total MRV cost per farm (\$) \\
    $u_0$            & Farmer's outside option (reservation utility) \\
\end{tabular}
\caption{Key notation.}
\label{tab:notation}
\end{table}

MRV cost per farm is comprised of both a fixed component ($m_0$) and a variable area-dependent component ($m_1$) \citep{ducos2009}, with the $\delta$ term capturing the effect of scaling efficiency~\citep{scale}.
\begin{equation}
m = m_0 + m_1 \cdot f^{\delta}
\label{eq:mrv_cost}
\end{equation} 
We instantiate the environment with agronomic parameters for Indian paddy smallholders, following the agricultural literature (Supplementary Document (SD) E).

\begin{definition}[Action-Based Contract]
An action-based contract is a tuple $(\mathbf{a}, \alpha)$ where $\mathbf{a} = (a_1, \ldots, a_{|B|}) \in [0,a_{max}]^{|B|}$ is the vector of per-practice action payments and $a_{max}$ is the maximum per-action payment allowed. $\alpha \in [0,1]$ is the farmer's share of MRV costs.
\end{definition}

\subsection{Single-Season Game Structure}

The game proceeds as follows: (1) the aggregator offers a contract $(\mathbf{a}, \alpha)$; (2) the farmer observes the contract and chooses a subset of practices $S \subseteq B$ to adopt as well as whether or not to accept the contract; (3) payoffs are realised.

\paragraph{Farmer utility.} Given contract $(\mathbf{a}, \alpha)$, the farmer's utility from adopting practice set $S$ is:
\begin{equation}
    U_f(S) = \sum_{i \in S} \left( y_i - k_i + a_i \right) f - \alpha \cdot m
\end{equation}

Practice $i$ is adopted iff it contributes non-negatively:
\begin{equation}
    S = \left\{ i \in B : y_i - k_i + a_i \ge 0 \right\}
\end{equation}
The farmer accepts the contract iff $U_f(S) \geq u_0$, breaking ties in favor of the principal per standard convention. Here $S_0 = \{i \in B : y_i - k_i \geq 0\}$ is the farmer's outside option practice set and $u_0 = f\sum_{i \in S_0}(y_i - k_i)$. Farmers rejecting the contract may still adopt practices in $S_0$ independently.

\paragraph{Aggregator utility.}
Conditional on contract acceptance, the aggregator's utility is
\begin{equation}
    U_A(S) = P \cdot f \sum_{i \in S} c_i
    - f\sum_{i \in S} a_i - (1 - \alpha) \cdot m
\end{equation}
Otherwise, the aggregator's utility is $0$. Henceforth, all aggregator objectives are implicitly restricted to accepting farmers.

\subsection{Equilibrium Analysis}

\begin{proposition}
At any solution to the aggregator's problem in which the adoption
constraint is slack, the Individual Rationality constraint binds, thus
equilibrium farmer utility satisfies $U_f^* = u_0$. 

(Standard result, proof available in SD-A.)
\end{proposition}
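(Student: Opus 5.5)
We argue by contradiction, using a local perturbation of the contract. Write the aggregator's problem as choosing $(\mathbf{a},\alpha)$ to maximise $U_A(S)$ subject to two constraints. The first is the \emph{adoption constraint} $S=\{i\in B: y_i-k_i+a_i\ge 0\}$, which fixes the intended practice set $S$. The second is the Individual Rationality (IR) constraint $U_f(S)\ge u_0$. Fix an optimal contract $(\mathbf{a}^*,\alpha^*)$ at which the adoption constraint is slack, i.e.\ $y_i-k_i+a_i^*>0$ for every $i\in S$ and $y_i-k_i+a_i^*<0$ for every $i\notin S$. Suppose, for contradiction, that IR is slack: $U_f(S)-u_0=\varepsilon>0$. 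We exhibit a feasible contract that gives the aggregator strictly higher utility.

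\textbf{Step 1 (transfer of surplus).} Note that $U_f$ and $U_A$ share the transfer terms $f\sum_{i\in S}a_i$ and $\alpha m$ with opposite signs. Hence any change in $(\mathbf{a},\alpha)$ that keeps $S$ fixed lowers $U_f$ by exactly the amount it raises $U_A$. We will use one of two perturbations.

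\textbf{Step 2a (lower an action payment).} If some $i\in S$ has $a_i^*>0$, decrease $a_i^*$ by $\eta>0$. Choose $\eta$ small enough that $\eta<y_i-k_i+a_i^*$, so practice $i$ is still adopted and $S$ is unchanged. Also require $\eta f\le\varepsilon$, so IR still holds, and $\eta\le a_i^*$, so the payment stays in $[0,a_{max}]$. This raises $U_A$ by $\eta f>0$.

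\textbf{Step 2b (raise the farmer's MRV share).} Otherwise $a_i^*=0$ for all $i\in S$, and we raise $\alpha^*$ instead. If $\alpha^*<1$ and $m>0$, increase $\alpha$ by $\eta\le\min\{1-\alpha^*,\varepsilon/m\}$. This leaves $S$ unchanged, since $S$ does not depend on $\alpha$, and raises $U_A$ by $\eta m>0$.

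\textbf{Step 3 (the degenerate case).} The remaining case is where neither perturbation is available: all $a_i^*=0$ on $S$, and $\alpha^*=1$ (or $m=0$). Then $U_f(S)=f\sum_{i\in S}(y_i-k_i)-m$. Slackness of adoption gives $y_i-k_i>0$ for $i\in S$ and $y_i-k_i<0$ for $i\notin S$, so $S=S_0$. Therefore $U_f(S)=u_0-m\le u_0$. This contradicts $\varepsilon>0$ whenever $m>0$; if $m=0$, it forces $U_f=u_0$ directly.

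In every case, either the slack-IR assumption contradicts optimality or $U_f^*=u_0$. Hence IR binds.

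The main obstacle is this boundary bookkeeping. Step 2 needs strict adoption slack so that the perturbation does not change $S$, and it must respect the box constraints $a_i\in[0,a_{max}]$ and $\alpha\le1$. The degenerate corner has to be handled separately, by using the definition of $u_0$ through $S_0$.
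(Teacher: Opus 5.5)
Your proof is correct and follows essentially the same route as the paper. Both assume IR is slack, lower an action payment on an adopted practice by an amount bounded by the adoption slack, the payment itself and $(U_f-u_0)/f$, and dispose of the zero-payment case by showing $S=S_0$, so that $U_f=u_0-\alpha^*m\le u_0$. Your Step~2b is valid but redundant: the Step~3 computation goes through for any $\alpha^*$, so it already rules out the case $a_i^*=0$ for all $i\in S$ without perturbing $\alpha$. Splitting on that case, rather than on $\mathbf{a}^*=\mathbf{0}$ as the paper does, is in fact the slightly more careful choice.
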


Substituting the binding IR constraint into the aggregator's objective yields the equilibrium aggregator utility:
\begin{equation}
    U_A^* = P \cdot f \sum_{i \in S} c_i
           + f \sum_{i \in S} (y_i - k_i)
           - f \sum_{i \in S_0} (y_i - k_i) - m
    \label{eqn:aggregator-surplus}
\end{equation}

\section{Extensions}

\subsection{Moral Hazard}

The aggregator observes adopted practices but not the effort devoted to each. The farmer privately chooses $e_i \in [\underline{e}, 1]$ for each adopted practice $i$. We set $\underline{e} = 0.2$, reflecting that some effort is unavoidable under MRV. We perform a sensitivity analysis on $\underline{e}$ in SD-G, to show our conclusions are not an artifact of this choice.
Effort scales cost, yield and carbon outcomes: realised adoption cost is $e_i \cdot k_i$, realised yield $e_i \cdot y_i$,  and realised sequestration $e_i \cdot c_i$.
Under a pure action-based contract ($R = 0$), the farmer has no incentive to exert effort beyond the minimum, so $e_i = \underline{e}$ for all practices that are not individually profitable. To restore effort incentives, the aggregator augments the action-based contract with a result based payment $R$, yielding the hybrid contract $(\mathbf{a}, \alpha, R)$. The farmer's utility is:
\begin{equation}
    \label{eqn:mh_farmers_utility}
    \begin{split}
    U_f(S, \mathbf{e}) ={} &
    \sum_{i \in S} \left( e_i \cdot y_i - e_i \cdot k_i + a_i \right) f
    - \alpha \cdot m \\
    & + \left( \sum_{i \in S} e_i \cdot c_i \right) f R
    \end{split}
\end{equation}

Since the farmer's per-practice objective is linear in $e_i$, the best response is a threshold policy:
\begin{equation}
    \label{eqn:effort_best_response}
    e_i =
    \begin{cases}
        1 & \text{if } y_i + c_i R \geq k_i \\
        \underline{e} & \text{otherwise}
    \end{cases}
\end{equation}

The aggregator's utility under the hybrid contract is:
\begin{equation}
    \label{eqn:mh_agg_utility}
    U_A(S,\mathbf{e}) =
    f (P - R)\sum_{i \in S} e_i \cdot c_i
    - f \sum_{i \in S} a_i
    - (1-\alpha) \cdot m
\end{equation}

\subsection{Adverse Selection}

Adoption costs vary across farmers due to differences in soil quality, topography, access to capital, and local labour conditions \citep{eagle2022realizable,antle2003}. We model contracting  using a single pooled contract offered to the entire farmer population, consistent with current commercial practice. Prior work~\citep{dai_li, rob_2005} similarly studies pooling contracts rather than type-separating menus. Following \citet{rob_2005}, we model the effective adoption cost for farmer $j$ and practice $i$ as:
\begin{equation}
    k_{ji}^{\text{eff}} = \theta_j \cdot k_i, \qquad
    \theta_j \sim \text{Uniform}[0, 1]
\end{equation}

The scaling factor $\theta_j$ is private information, thus the aggregator cannot condition payments on the farmer's realised type.

\subsection{Multi-Season Dynamics}

A distinguishing feature of our framework is the explicit modelling of how current contracting decisions shape future states, captured by the following.

\paragraph{Yield ramp-up.} Following the modelling of \citet{dabbert1986} and the results of \citet{datta2025}, we model yield as ramping up linearly to its equilibrium from season $t=0$ to season $T_{L}=3$.
\begin{equation}
    \label{eqn:yield_ramp_up}
    y_{ji}^{t_{ji}} = \begin{cases}
        y_{i} & \text{if } y_{i} \leq 0 \\[4pt]
        y_{i} \cdot \min\left(\dfrac{t_{ji}}{T_{L}},\, 1\right)
                  & \text{if } y_{i} > 0
    \end{cases}
\end{equation}
where $t_{ji}$ is the number of seasons farmer $j$ has performed practice $i$. Negative yield effects are realised immediately, while positive effects emerge over time.

\paragraph{Adoption cost decay.} Motivated by evidence that imperfect knowledge of
new-technology management constitutes a significant adoption barrier that diminishes
rapidly with accumulated experience \citep{foster1995}, we model adoption costs as
decaying linearly toward their type-scaled equilibrium values $k^{\mathrm{eff}}_{ji}$
over $T_{L} = 3$ seasons:
\begin{equation}
    \label{eqn:adoption_cost_ramp_up}
    k_{ji}^{t_{ji}} = k^{\mathrm{eff}}_{ji} + \tfrac{1}{2}\,|k^{\mathrm{eff}}_{ji}| \cdot
    \max\left(1 - \frac{t_{ji}}{T_{L}},\, 0\right)
\end{equation}

\paragraph{Soil organic carbon dynamics.} We adopt the C-saturation model of \citet{stewart2007} with decomposition rate $h = 0.02$\,yr$^{-1}$ from \citet{coleman1996}:
\begin{equation}
    \label{eqn:soc_dynamic}
    C_j^{t+1} = \max\left(0,\;
        C_j^t
        + \tilde{I}_j^t \cdot \left(1 - \frac{C_j^t}{C_m}\right)
        - h \cdot C_j^t
    \right)
\end{equation}
where $C_m$ is the soil-type-specific saturation capacity and the saturation deficit captures diminishing returns as soils approach capacity. Realized carbon input is the sum of effort-dependent practice contributions and a stochastic shock. As effort is unobservable and the shock breaks the deterministic link between effort and outcome, effort is not directly contractible, creating moral hazard. Farmers make decisions based on the expected carbon sequestered.
\begin{equation}
    \label{eqn:soc_input_noise}
    \tilde{I}_j^t = \sum_{i \in S_j^{t}} e_{ji}^t \cdot  c_{i} + \varepsilon_j^t,
    \qquad \varepsilon_j^t \sim \mathcal{N}(0,\, \sigma^2)
\end{equation}

\subsection{POMDP Formulation}

We extend the single-season Stackelberg game to a multi-season POMDP in which the
aggregator is an RL agent offering a single pooled contract to $N$ heterogeneous
farmers over a horizon of $T$ seasons. Farmers best-respond myopically each season.

\paragraph{Observation space.} Under pooled contracting, the aggregator observes the farmer population through aggregate statistics of the population. These aggregate observations condition the contract offered in the following season and define the observation at state $t$:
\begin{equation}
    o^t = \left(
        \tau^t,\;
        \bar{f},\;
        \overline{\mathrm{SOC}}^t,\;
        \bar{\tilde{c}}^t,\;
        \bar{d}^t,\;
        \bar{\rho}^t
    \right)
\end{equation}
where $\tau^t$ is the season index, $\bar{f}$ the mean farm size,
$\overline{\mathrm{SOC}}^t$ the mean soil organic carbon, $\bar{\tilde{c}}^t$ the
mean carbon outcome realised in season $t$, $\bar{d}^t \in [0,1]^{|B|}$ the
vector of per-practice adoption rates, and $\bar{\rho}^t \in [0,1]$ the
participation rate. All components are normalised to approximately $[0,1]$ for
training stability. Farmer types $\theta_j$ remain unobserved, consistent with the
adverse selection setting.

\paragraph{Action space.} At each season $t$, the aggregator offers a pooled contract
$(\mathbf{a}^t, \alpha^t, R^t)$. Each farmer $j$ responds with a participation decision, 
adoption decision and effort profile.
\begin{equation}
    \label{eqn:seasonal_adoption}
    \begin{split}
    S_j^t = \Big\{ i \in B \;:\;
        e_{ji}^t\Big( &y_{ji}^{t_{ji}} - k_{ji}^{t_{ji}} \\
        &+ c_i \big(1 - \tfrac{C_j^t}{C_m}\big) R^t \Big)
        + a_i^t \;\ge\; 0 \Big\}
    \end{split}
\end{equation}

The farm-level MRV cost and decomposition terms are absent as neither
depends on $i$ and so neither affects the marginal comparison. A farmer accepts the contract only if the expected utility from participation equals or exceeds the baseline, i.e $U^t_{f_j}(S_j^t, \mathbf{e}_j^t) \geq u_{0_j}^t$. Otherwise the farmer retains the outside option by adopting only practices that are inherently profitable without contract incentives.

\paragraph{Farmer utility.}
Farmer $j$'s utility follows
Eq.~\ref{eqn:mh_farmers_utility}, with season-dependent yield, cost and carbon terms defined by the multi-season dynamics of
Eqs.~\ref{eqn:yield_ramp_up}--\ref{eqn:soc_dynamic}, and farmer $j$'s farm
size $f_j$:
\begin{align}
U_{f_j}^t(S_j^t, \mathbf{e}_j^t) = &\sum_{i \in S_j^t} \left( e_{ji}^{t} y_{ji}^{t_{ji}} - e_{ji}^{t} k_{ji}^{t_{ji}} + a_i^t \right) f_j \nonumber \\
&- \alpha^t \, m_j + (C_j^{t+1}-C_j^t) \cdot f_j R^t
\label{eqn:mdp_farmer_utility}
\end{align}

Both parties are paid on the actual net carbon sequestered.

\paragraph{Aggregator reward.}
Unlike the single-season objective (Eq.~\ref{eqn:mh_agg_utility}), the
per-season reward here is computed on net carbon after saturation and
decomposition, and summed across the farmer population. The aggregator's
per-season reward is:
\begin{align}
r^t = \sum_{j=1}^{N} \Bigg[ &f_j (P - R^t)(C_j^{t+1}-C_j^t) - f_j \sum_{i \in S^{t}_j} a_i^t \notag \\
&- (1-\alpha^t)\, m_j \Bigg]
\label{eqn:agg_reward}
\end{align}

\section{Methodology}
All main results are aggregated across 25 random seeds. Our objective is to learn the profit-maximising contract for a given
farmer population rather than to generalise across populations, consistent
with our design in which the aggregator is retrained per region. Each seed
therefore fixes a population instance, and the aggregator is trained and
evaluated on that instance. All confidence intervals are 95\% percentile bootstrap intervals resampled
per seed. All p-values use the Wilcoxon signed-rank test, paired over the 25 seeds for between-condition comparisons, and as a one-sample test against zero for the correlation coefficients.

We train the aggregator with TQC (Truncated Quantile Critics; \citet{tqc}). In a preliminary comparison
over 5 seeds per ablation, TQC matched or outperformed SAC~\citep{sac} across all
conditions, from a statistical tie in the MH-only case to a $\sim$7\%
profit-capture-ratio gain (SD-C). On-policy PPO~\cite{ppo} failed to converge to a competitive policy in our setting even
after relaxing the training curriculum, so we focus on off-policy methods. Exact training details are available in SD-D.

\subsection{Welfare Baseline}

To assess the learned contracts, we normalize the aggregator's realized profit
against a \emph{first-best welfare baseline}: the maximum total welfare
achievable under direct carbon-market participation with myopically best-responding farmers. Comparison against the first-best is
standard in contract theory~\citep{laffont} and algorithmic contract
design~\citep{dutting_2024}. We construct it by setting the carbon payment rate
$R$ equal to the market price $P$ and the MRV cost-share $\alpha = 1$, so the
aggregator earns zero margin and farmers face the full social value of their
actions, equivalent to transacting directly in the carbon market with no
intermediary. Farmers best-respond as elsewhere, accepting if participating
profit exceeds or equals their reservation utility $u_0$. Welfare is the sum of
accepted farmers' surplus above this outside option, counting only value created by the contract.

Since this ceiling is invariant to how surplus is split between aggregator and
farmers, an aggregator whose profit approaches it is capturing nearly all
system surplus, while a large gap reflects either value destroyed by frictions
or retained by farmers. We report the profit-capture ratio, RL reward
over first-best welfare, computed per ablation. We also give the full
decomposition into farmer surplus, aggregator surplus, and deadweight loss in SD-J.

\subsection{Ablations}

\begin{table}[htbp]
\centering
\begin{tabular}{lccc}
\toprule
Condition & Effort $e_i$ & Cost type $\theta_j$ & Farm size $f_j$ \\
\midrule
MH+AS   & $[0.2, 1.0]$ & $\sim U[0,1]$ & $5.5$ \\
MH only & $[0.2, 1.0]$ & $0.5$         & $5.5$ \\
AS only & $1.0$          & $\sim U[0,1]$ & $5.5$ \\
Neither & $1.0$          & $0.5$         & $5.5$ \\
\midrule
MH+AS   & $[0.2, 1.0]$ & $\sim U[0,1]$ & $\sim U[1,10]$ \\
MH only & $[0.2, 1.0]$ & $0.5$         & $\sim U[1,10]$ \\
AS only & $1.0$          & $\sim U[0,1]$ & $\sim U[1,10]$ \\
Neither & $1.0$          & $0.5$         & $\sim U[1,10]$ \\
\bottomrule
\end{tabular}
\caption{Ablations.}
\label{tab:ablation}
\end{table}

We run 8 ablations (Table~\ref{tab:ablation}) by independently toggling adverse selection and moral hazard, each crossed with two farm-size regimes. When moral hazard is absent, effort is fixed at $e_i = 1.0$. When it is present, farmers privately choose $e_i \in [\underline{e}, 1.0]$ per practice, realised at $\underline{e}$ or $1$,  which the aggregator cannot observe. When adverse selection is absent, the cost multiplier is fixed at the population mean $\theta_j = 0.5$. When it is present, $\theta_j$ is drawn independently per farmer and never revealed. Crossing these four combinations with fixed (5.5 ha) and heterogeneous ($f_j \sim U[1,10]$ ha) farm sizes isolates the effect of the information frictions from that of farm-size heterogeneity. Initial SOC is drawn uniformly ($C^0_j \sim U[30,80]$ tCO\textsubscript{2}e/ha) in all ablations.

\section{Results}

\subsection{Fixed Farm Size}
We first fix farm size to isolate the effect of the frictions. 
\begin{table}[htbp]
\centering
\begin{tabular}{lccc}
\toprule
\textbf{Ablation} & \textbf{Mean} & \textbf{Std} & \textbf{95\% CI} \\
\midrule
MH+AS & 0.599 & 0.155 & [0.538, 0.658] \\
MH only & \underline{0.699} & 0.147 & [0.640, 0.755] \\
AS only & 0.566 & \textbf{0.109} & [0.520, 0.607] \\
Neither & \textbf{0.809} & \underline{0.115} & [0.761, 0.851] \\
\bottomrule
\end{tabular}
\caption{Profit-capture ratio, fixed farm size.}
\label{tab:efficiency_ratio}
\end{table}

The profit-capture ratio is calculated as the cumulative aggregator reward over 5 seasons divided by the cumulative welfare baseline.
Table~\ref{tab:efficiency_ratio} shows that the Neither condition outperforms all others. More surprisingly,  MH+AS outperforms AS only,  in aggregate though the difference is not statistically 
significant across seeds ($p = 0.080$). The effect arises from moral hazard, with practices unprofitable at full effort becoming profitable at low effort, raising adoption and ultimately profitability. Because effort scales realised loss but not the action payment, moral hazard lets the farmer limit downside on an otherwise unprofitable practice rather than bear it in full. Aggregated across the population, this effect raises adoption under MH+AS relative to AS only, outweighing the additional friction introduced by moral hazard. We provide a complete worked example of this effect in SD-H.
Table~\ref{tab:adoption_rate} shows how adoption rate varies across conditions, averaged over farmers, seasons and seeds. More information regarding the contract payment structure itself is available in SD-K.

\begin{table}[htbp]
\centering
\begin{tabular}{lccc}
\toprule
\textbf{Ablation} & \textbf{Mean} & \textbf{Std} & \textbf{95\% CI} \\
\midrule
MH+AS & 0.542 & 0.204 & [0.464, 0.623] \\
MH only & \underline{0.592} & 0.179 & [0.521, 0.661] \\
AS only & 0.428 & \textbf{0.093} & [0.392, 0.464] \\
Neither & \textbf{0.691} & \underline{0.154} & [0.630, 0.751] \\
\bottomrule
\end{tabular}
\caption{Adoption rate, fixed farm size.}
\label{tab:adoption_rate}
\end{table}

\subsection{Variable Farm Size}
We next allow farm size to vary across farmers, drawn from a uniform distribution. Real smallholdings are skewed towards smaller
areas and better described by a log-normal distribution, but the uniform draw provides coverage across the size range, which are better suited to characterizing adoption by farm size.

\begin{table*}[htbp]
\centering
\begin{tabular}{lccc}
\toprule
\textbf{Metric} & \textbf{Small [1-4 Ha]} & \textbf{Medium [4-7 Ha]} & \textbf{Large [7-10 Ha]} \\
\midrule
First-Best Adoption & 19.4\% [16.6\%, 22.3\%] & 68.7\% [65.1\%, 72.5\%] & 88.1\% [86.4\%, 89.7\%] \\
RL Adoption & 1.7\% [0.8\%, 2.8\%] & 16.5\% [13.2\%, 20.4\%] & 77.3\% [71.9\%, 82.7\%] \\
RL Adoption / FB Adoption & 8.2\% [4.3\%, 12.7\%] & 23.8\% [19.5\%, 28.6\%] & 87.7\% [81.7\%, 93.4\%] \\
Expected RL Profit/Farm   & \$7 [\$3, \$11] & \$96 [\$82, \$110] & \$404 [\$375, \$434] \\
Expected per-farm RL Profit/ha  & \$2 [\$1, \$3] & \$16 [\$14, \$19] & \$47 [\$44, \$50] \\
\bottomrule
\end{tabular}
\caption{Adoption and profit by farm size tercile: (MH+AS), $m_0=300, m_1=80, \delta=0.7$ (95\% CI).}
\label{tab:farm_size_adoption}
\end{table*}

\begin{table*}[htbp]
\centering
\begin{tabular}{lccc}
\toprule
\textbf{Metric} & \textbf{Small [1-4 Ha]} & \textbf{Medium [4-7 Ha]} & \textbf{Large [7-10 Ha]} \\
\midrule
First-Best Adoption & 72.1\% [69.0\%, 75.4\%] & 72.1\% [68.2\%, 76.0\%] & 71.3\% [68.9\%, 73.9\%] \\
RL Adoption & 56.5\% [47.2\%, 65.9\%] & 57.4\% [48.8\%, 66.0\%] & 56.1\% [47.9\%, 64.2\%] \\
RL Adoption / FB Adoption & 78.8\% [65.8\%, 92.1\%] & 78.3\% [68.3\%, 87.9\%] & 77.6\% [67.3\%, 87.6\%] \\
Expected RL Profit/Farm   & \$65 [\$54, \$77] & \$157 [\$137, \$177] & \$233 [\$203, \$264] \\
Expected per-farm RL Profit/ha & \$26 [\$22, \$30] & \$28 [\$25, \$32] & \$27 [\$24, \$31] \\
\bottomrule
\end{tabular}
\caption{Adoption and profit by farm size tercile: (MH+AS), $m_0=0, m_1=101.3, \delta=1.0$ (95\% CI).}
\label{tab:farm_size_adoption_no_fixed}
\end{table*}

\begin{table}[htbp]
\centering
\begin{tabular}{lccc}
\toprule
\textbf{Ablation} & \textbf{Mean} & \textbf{Std} & \textbf{95\% CI} \\
\midrule
MH+AS & \underline{0.620} & 0.109 & [0.577, 0.663] \\
MH only & 0.612 & \textbf{0.096} & [0.574, 0.650] \\
AS only & 0.568 & 0.104 & [0.528, 0.610] \\
Neither & \textbf{0.680} & \underline{0.100} & [0.641, 0.720] \\
\bottomrule
\end{tabular}
\caption{Profit-capture ratio, variable farm size.}
\label{tab:efficiency_ratio_het}
\end{table}

\begin{table}[htbp]
\centering
\begin{tabular}{lccc}
\toprule
\textbf{Ablation} & \textbf{Mean} & \textbf{Std} & \textbf{95\% CI} \\
\midrule
MH+AS & 0.320 & \underline{0.075} & [0.292, 0.350] \\
MH only & 0.315 & \textbf{0.046} & [0.298, 0.333] \\
AS only & \underline{0.337} & 0.076 & [0.310, 0.369] \\
Neither & \textbf{0.411} & 0.095 & [0.376, 0.449] \\
\bottomrule
\end{tabular}
\caption{Adoption rate, variable farm size.}
\label{tab:adoption_rate_het}
\end{table}

\paragraph{Friction comparisons.} Profit-capture ratios (Table~\ref{tab:efficiency_ratio_het}) follow a similar overall ordering, but the gap between the best and worst performers narrows, and the best performer does substantially worse than under fixed farm size. We find two interesting results. First, MH+AS again outperforms AS only, this time significantly across seeds ($p < 10^{-4}$). This is not driven by adoption rate differences (Table~\ref{tab:adoption_rate_het}), but by a second effect of moral hazard. The aggregator learns to offer a lower average action payment, as it no longer has to compensate for the full cost of effort for otherwise unprofitable actions. We calculated the average payment across adopted practices, and it was \$1.35/ha less ($p < 10^{-4}$) in the MH+AS case compared with the AS only case.

Second, MH+AS achieves comparable performance to MH-only ($p = 0.916$), suggesting the adverse selection mechanism does not meaningfully reduce performance when moral hazard is present. We leave full characterization of this interaction to future work.

\paragraph{Amplified smallholder exclusion.}Beyond the contracts themselves, our framework speaks to who participates and
why. Table~\ref{tab:farm_size_adoption} reports adoption by farm-size tercile
under RL and under first-best, together with the RAS. Two
gradients emerge, with the first mechanical. Fixed per-farmer MRV costs do
not scale with area, so the same cost is amortised over more
hectares on larger farms, making larger farms more likely to clear the threshold. The first-best already embeds this gradient because it uses the same MRV cost structure. RAS captures the residual after this control. The aggregator
realises 87.7\% of achievable adoption on large farms but only 8.2\% on
smallholdings. Because the benchmark already incorporates fixed costs, this residual exceeds the gradient attributable to cost structure alone.

We attribute the residual to the interaction between profit maximisation and the
pooling constraint. Under a single per-hectare schedule, any increase in payment generous
enough to bring marginal smallholders above their threshold must also be
paid on every hectare already enrolled. That inframarginal cost scales
with the area of the large farms in the pool and outweighs the thin
margin recovered from newly included smallholders, so the
profit-maximising contract stays below the level that would admit them.

\paragraph{MRV counterfactual.} To test whether this exclusion is rooted in the cost structure rather than a
fundamental barrier, we consider a counterfactual in which MRV costs are made
purely area-proportional. The fixed per-farmer component is removed, the scaling exponent is set to one, and the variable rate is recalibrated so that total MRV
liability over the farmer population is unchanged from baseline (SD-I). The regime therefore \emph{redistributes} MRV cost across farm sizes rather than lowering it. This requires large farmers in the pool. The same effect could also be reached through alternate strategies, such as government or NGO socialisation, or by lowering fixed costs through cheaper monitoring methods such as remote sensing. Evaluating these alternatives is exactly the kind of policy question the framework is
built for.

Table~\ref{tab:farm_size_adoption_no_fixed} shows that the size gradient
vanishes. First-best and RL adoption are both flat across terciles, as is RAS,
whose intervals fully overlap. Expected profit per hectare\footnote{ Let $ s_j $ denote the aggregator's profit from farm $j$ and $f_j$ its area. Expectations are taken over all farms in the tercile. $\mathbb{E}[ s_j/f_j ]$ is the expected per-farm profit per hectare.} is now flat across
farm sizes, removing the incentive to favour larger holdings that the fixed-cost
component otherwise creates. The equalisation is two-sided, as smallholder
adoption rises sharply while large-farm adoption falls, the latter reflecting
the higher share of MRV costs large farms now bear. The change does not reduce overall participation or enrolled area, as both increase under the RL contract, so smallholder gains outweigh large-farm losses on both counts
(SD-I). Aggregator profit falls by roughly 10\%, which is the
expected consequence of removing a cost asymmetry exploited by the profit-maximising contract. The restructuring is therefore not one an aggregator would adopt
unilaterally, which is what makes it a policy instrument rather than a business
recommendation.

\paragraph{Farmer-level correlates.}Table~\ref{tab:adoption_correlations_het} corroborates this pattern at the farmer level. Across all information conditions, farm
size has the strongest association with participation, with correlations that are statistically significant. Correlations with initial SOC and adoption cost are weaker, despite generally being significant. The substantive
point is not that larger farms participate more, a generic consequence
of area-scaled contracts, but that the effect is large enough to mask the
informational frictions, which surface only once farm size is controlled for.
\begin{table}[htbp]
\centering
\begin{tabular}{lccc}
\toprule
\textbf{Condition} & \textbf{Mean $r$} & \textbf{95\% CI} & \textbf{$p$-value} \\
\midrule
\multicolumn{4}{l}{\textbf{Farm Size}} \\
MH+AS & 0.693 & [0.658, 0.726] & $< 10^{-4}$ \\
MH only & 0.748 & [0.723, 0.771] & $< 10^{-4}$ \\
AS only & 0.642 & [0.622, 0.661] & $< 10^{-4}$ \\
Neither & 0.787 & [0.771, 0.803] & $< 10^{-4}$ \\
\midrule
\multicolumn{4}{l}{\textbf{Initial SOC}} \\
MH+AS & -0.159 & [-0.191, -0.125] & $< 10^{-4}$ \\
MH only & -0.209 & [-0.252, -0.166] & $< 10^{-4}$ \\
AS only & -0.114 & [-0.151, -0.073] & $< 10^{-4}$ \\
Neither & -0.155 & [-0.199, -0.108] & $< 10^{-4}$ \\
\midrule
\multicolumn{4}{l}{\textbf{Adoption Cost}} \\
MH+AS & -0.114 & [-0.155, -0.075] & $< 10^{-4}$ \\
MH only & --- & --- & --- \\
AS only & 0.033 & [-0.001, 0.071] & 0.1336 \\
Neither & --- & --- & --- \\
\bottomrule
\end{tabular}
\caption{Point-biserial correlation between farmer characteristics and adoption, variable farm size.}
\label{tab:adoption_correlations_het}
\end{table}

\section{Limitations}

\paragraph{Additionality.} We do not impose an additionality constraint in our model. However, as Proposition 2 (SD-B) demonstrates, this yields a conservative lower bound on smallholder exclusion because enforcing additionality strictly raises the minimum viable farm size. We also abstract away from carbon permanence, which real-world registries enforce over multi-decade horizons rather than our five-season simulation.

\paragraph{Farmer behaviour.}Farmers best-respond each season, as this is the standard assumption in learning-based Stackelberg settings~\citep{nika_22}, and the short effective planning horizons documented among smallholders~\citep{duflo} make the asymmetry with our forward-looking aggregator realistic rather than a modelling shortcut. Farmers are also risk-neutral and face no credit, tenure, or participation costs beyond MRV. These omitted frictions bind harder on smallholders, so our exclusion estimates are a lower bound. 

\paragraph{Non-learning baselines.}We do not benchmark against non-learning optimization algorithms, as we make no claims regarding the absolute efficiency of the learner. Such a comparison would ask whether the size gradient comes from the environment or from the learner, and our cost counterfactual answers that directly.

\paragraph{Calibration scope.}Finally, our parameters characterise a single regional instantiation with farm sizes drawn uniformly and MRV charged per farm. The magnitudes should be
read as illustrative of that setting, and thorough empirical validation is left to future work. The simulator is configurable, so a user can
recalibrate to their own region and test with no changes to the model.

\section{Conclusion}

We have studied how commercially rational carbon-farming contracts shape participation under adverse selection, moral hazard, and multi-season agronomic dynamics. Our results show that commercially rational pooled contracts systematically exclude smallholders beyond what the underlying MRV cost structure alone explains, concentrating participation among larger farms. This exclusion is not inevitable. Redistributing MRV costs in proportion to cultivated area, while preserving the overall expected MRV liability of the system, removes the participation gradient. Greater smallholder inclusion directly improves both equity and the share of soil carbon potential that markets can reach.

Beyond climate finance, this formulation contributes directly to the growing subfield of automated contract design. We model the problem as a POMDP in which a learning principal offers a
single contract to a heterogeneous, partially observed population under
non-stationary dynamics. We treat moral hazard and adverse selection
jointly and over a sequential horizon whereas prior learning approaches
address the frictions in isolation and in stationary settings.

While learned contracts are traditionally evaluated by principal utility, welfare, or regret, we demonstrate the necessity of quantifying their distributional consequences in public-good domains. The proposed Realised Adoption Share (RAS) distinguishes exclusion imposed by baseline cost structures from that amplified by profit maximisation under pooling. This dynamic extends beyond agriculture. Any pooled contract with fixed per-participant costs, such as microcredit or insurance, excludes those below a specific value cutoff. Our results demonstrate that profit maximisation pushes this cutoff well above the welfare-maximising baseline, a margin RAS is built to measure. More broadly, our work illustrates how AI can support the design and evaluation of economic mechanisms by revealing their societal consequences before deployment.

We release our open-source environment to serve both as an experimental testbed for AI researchers and as a decision-support tool for policymakers evaluating market interventions.

\bibliography{references}

\begin{thebibliography}{43}
\providecommand{\natexlab}[1]{#1}

\bibitem[{Antle et~al.(2003)Antle, Capalbo, Mooney, Elliott, and
  Paustian}]{antle2003}
Antle, J.; Capalbo, S.; Mooney, S.; Elliott, E.; and Paustian, K. 2003.
\newblock Spatial heterogeneity, contract design, and the efficiency of carbon
  sequestration policies for agriculture.
\newblock \emph{Journal of Environmental Economics and Management}, 46(2):
  231--250.

\bibitem[{Barbato and Strong(2023)}]{Barbato2023FarmerPerspectives}
Barbato, C.~T.; and Strong, A.~L. 2023.
\newblock Farmer Perspectives on Carbon Markets Incentivizing Agricultural Soil
  Carbon Sequestration.
\newblock \emph{npj Climate Action}, 2: 26.

\bibitem[{Bellassen et~al.(2015)Bellassen, Stephan, Afriat, Alberola, Barker,
  Chang, Chiquet, Cochran, Deheza, Dimopoulos, Foucherot, Jacquier, Morel,
  Robinson, and Shishlov}]{scale}
Bellassen, V.; Stephan, N.; Afriat, M.; Alberola, E.; Barker, A.; Chang, J.-P.;
  Chiquet, C.; Cochran, I.; Deheza, M.; Dimopoulos, C.; Foucherot, C.;
  Jacquier, G.; Morel, R.; Robinson, R.; and Shishlov, I. 2015.
\newblock Monitoring, Reporting and Verifying Emissions in the Climate Economy.
\newblock \emph{Nature Climate Change}, 5(4): 319--328.

\bibitem[{Bollini et~al.(2026)Bollini, Bacchiocchi, Coutts, Castiglioni, and
  Marchesi}]{bollini2026learning}
Bollini, M.; Bacchiocchi, F.; Coutts, S.; Castiglioni, M.; and Marchesi, A.
  2026.
\newblock Learning in Bayesian Stackelberg Games With Unknown Follower's Types.
\newblock In \emph{Forty-third International Conference on Machine Learning}.

\bibitem[{Cariappa and Krishna(2025)}]{Cariappa2025}
Cariappa, A. A.~G.; and Krishna, V.~V. 2025.
\newblock Carbon farming in India: are the existing projects inclusive,
  additional, and permanent?
\newblock \emph{Climate Policy}, 25(5): 756--771.

\bibitem[{Coleman and Jenkinson(1996)}]{coleman1996}
Coleman, K.; and Jenkinson, D.~S. 1996.
\newblock RothC-26.3 - A Model for the turnover of carbon in soil.
\newblock In Powlson, D.~S.; Smith, P.; and Smith, J.~U., eds.,
  \emph{Evaluation of Soil Organic Matter Models}, 237--246. Berlin,
  Heidelberg: Springer Berlin Heidelberg.
\newblock ISBN 978-3-642-61094-3.

\bibitem[{Conitzer and Sandholm(2003)}]{conitzer}
Conitzer, V.; and Sandholm, T. 2003.
\newblock Automated mechanism design: complexity results stemming from the
  single-agent setting.
\newblock In \emph{Proceedings of the 5th International Conference on
  Electronic Commerce}, ICEC '03, 17–24. New York, NY, USA: Association for
  Computing Machinery.
\newblock ISBN 1581137885.

\bibitem[{Dabbert and Madden(1986)}]{dabbert1986}
Dabbert, S.; and Madden, P. 1986.
\newblock The transition to organic agriculture: A multi-year simulation model
  of a Pennsylvania farm.
\newblock \emph{American Journal of Alternative Agriculture}, 1(3): 99--107.

\bibitem[{Dahlgreen and Parr(2024)}]{agroparam1}
Dahlgreen, J.; and Parr, A. 2024.
\newblock Exploring the Impact of Alternate Wetting and Drying and the System
  of Rice Intensification on Greenhouse Gas Emissions: A Review of Rice
  Cultivation Practices.
\newblock \emph{Agronomy}, 14(2).

\bibitem[{Dai~Li, Immorlica, and Lucier(2022)}]{dai_li}
Dai~Li, W.; Immorlica, N.; and Lucier, B. 2022.
\newblock Contract {{Design}} for {{Afforestation Programs}}.
\newblock In Feldman, M.; Fu, H.; and {Talgam-Cohen}, I., eds., \emph{Web and
  {{Internet Economics}}}, volume 13112, 113--130. Cham: Springer International
  Publishing.
\newblock ISBN 978-3-030-94675-3 978-3-030-94676-0.

\bibitem[{Das et~al.(2025)Das, Chatterjee, Saha, Sarkar, Alam, Dey, Ghosh,
  Nayak, Smith, and Pathak}]{agroparam2}
Das, S.~R.; Chatterjee, D.; Saha, S.; Sarkar, D.; Alam, R.; Dey, S.; Ghosh, S.;
  Nayak, B.~K.; Smith, P.; and Pathak, H. 2025.
\newblock Enhancing carbon sequestration potential of lowland rice
  agroecosystems for environmentally clean production system: A review.
\newblock \emph{Climate Smart Agriculture}, 2(2): 100054.

\bibitem[{Datta et~al.(2025)Datta, Wilke, Charles, Hasenick, Ulbrich, Singh,
  Sears, and Robertson}]{datta2025}
Datta, A.; Wilke, B.; Charles, C.; Hasenick, M.; Ulbrich, T.; Singh, M.; Sears,
  M.; and Robertson, G.~P. 2025.
\newblock Crop performance and profitability for the initial transition years
  of a regenerative cropping system in the Upper Midwest United States.
\newblock \emph{Journal of Environmental Quality}, 54(6): 1572--1585.

\bibitem[{Ducos, Dupraz, and Bonnieux(2009)}]{ducos2009}
Ducos, G.; Dupraz, P.; and Bonnieux, F. 2009.
\newblock Agri-environment contract adoption under fixed and variable
  compliance costs.
\newblock \emph{Journal of Environmental Planning and Management}, 52(5):
  669--687.

\bibitem[{Duflo, Kremer, and Robinson(2011)}]{duflo}
Duflo, E.; Kremer, M.; and Robinson, J. 2011.
\newblock Nudging Farmers to Use Fertilizer: Theory and Experimental Evidence
  from Kenya.
\newblock \emph{American Economic Review}, 101(6): 2350–90.

\bibitem[{D\"{u}tting, Roughgarden, and Talgam-Cohen(2019)}]{dutting_2019}
D\"{u}tting, P.; Roughgarden, T.; and Talgam-Cohen, I. 2019.
\newblock Simple versus Optimal Contracts.
\newblock In \emph{Proceedings of the 2019 ACM Conference on Economics and
  Computation}, EC '19, 369–387. New York, NY, USA: Association for Computing
  Machinery.
\newblock ISBN 9781450367929.

\bibitem[{Dütting, Feldman, and Talgam-Cohen(2024)}]{dutting_2024}
Dütting, P.; Feldman, M.; and Talgam-Cohen, I. 2024.
\newblock Algorithmic Contract Theory: A Survey.
\newblock \emph{Foundations and Trends® in Theoretical Computer Science}, 16:
  211--411.

\bibitem[{Eagle, Uludere~Aragon, and Gordon(2022)}]{eagle2022realizable}
Eagle, A.; Uludere~Aragon, N.; and Gordon, D. 2022.
\newblock The Realizable Magnitude of Carbon Sequestration in Global Cropland
  Soils: Socioeconomic Factors.
\newblock Technical report, Environmental Defense Fund, New York, New York.

\bibitem[{Foster and Rosenzweig(1995)}]{foster1995}
Foster, A.~D.; and Rosenzweig, M.~R. 1995.
\newblock Learning by {{Doing}} and {{Learning}} from {{Others}}: {{Human
  Capital}} and {{Technical Change}} in {{Agriculture}}.
\newblock \emph{Journal of Political Economy}, 103(6): 1176--1209.

\bibitem[{Gerstgrasser and Parkes(2023)}]{gerstgrasser2023}
Gerstgrasser, M.; and Parkes, D.~C. 2023.
\newblock Oracles \& Followers: Stackelberg Equilibria in Deep Multi-Agent
  Reinforcement Learning.
\newblock In Krause, A.; Brunskill, E.; Cho, K.; Engelhardt, B.; Sabato, S.;
  and Scarlett, J., eds., \emph{Proceedings of the 40th International
  Conference on Machine Learning}, volume 202 of \emph{Proceedings of Machine
  Learning Research}, 11213--11236. PMLR.

\bibitem[{Guruganesh, Schneider, and Wang(2021)}]{guruganesh_2021}
Guruganesh, G.; Schneider, J.; and Wang, J.~R. 2021.
\newblock Contracts under Moral Hazard and Adverse Selection.
\newblock In \emph{Proceedings of the 22nd ACM Conference on Economics and
  Computation}, EC '21, 563–582. New York, NY, USA: Association for Computing
  Machinery.
\newblock ISBN 9781450385541.

\bibitem[{Haarnoja et~al.(2018)Haarnoja, Zhou, Abbeel, and Levine}]{sac}
Haarnoja, T.; Zhou, A.; Abbeel, P.; and Levine, S. 2018.
\newblock Soft Actor-Critic: Off-Policy Maximum Entropy Deep Reinforcement
  Learning with a Stochastic Actor.
\newblock In Dy, J.; and Krause, A., eds., \emph{Proceedings of the 35th
  International Conference on Machine Learning}, volume~80 of \emph{Proceedings
  of Machine Learning Research}, 1861--1870. PMLR.

\bibitem[{Haghtalab et~al.(2022)Haghtalab, Lykouris, Nietert, and
  Wei}]{nika_22}
Haghtalab, N.; Lykouris, T.; Nietert, S.; and Wei, A. 2022.
\newblock Learning in Stackelberg Games with Non-myopic Agents.
\newblock In \emph{Proceedings of the 23rd ACM Conference on Economics and
  Computation}, EC '22, 917–918. New York, NY, USA: Association for Computing
  Machinery.
\newblock ISBN 9781450391504.

\bibitem[{Hart and Latacz-Lohmann(2005)}]{rob_2005}
Hart, R.; and Latacz-Lohmann, U. 2005.
\newblock Combating moral hazard in agri-environmental schemes: a
  multiple-agent approach.
\newblock \emph{European Review of Agricultural Economics}, 32(1): 75--91.

\bibitem[{Ivanov et~al.(2024)Ivanov, Dütting, Talgam-Cohen, Wang, and
  Parkes}]{ivanov2024}
Ivanov, D.; Dütting, P.; Talgam-Cohen, I.; Wang, T.; and Parkes, D.~C. 2024.
\newblock Principal-Agent Reinforcement Learning: Orchestrating AI Agents with
  Contracts.
\newblock arXiv:2407.18074.

\bibitem[{Johansson, Andersson, and Fischer(2025)}]{Johansson2025}
Johansson, E.; Andersson, E.; and Fischer, K. 2025.
\newblock The race for carbon in farmland: global mapping of the emerging
  voluntary market for soil carbon credits.
\newblock \emph{International Journal of Sustainable Development \& World
  Ecology}, 32(7): 810--826.

\bibitem[{Kuznetsov et~al.(2020)Kuznetsov, Shvechikov, Grishin, and
  Vetrov}]{tqc}
Kuznetsov, A.; Shvechikov, P.; Grishin, A.; and Vetrov, D. 2020.
\newblock Controlling Overestimation Bias with Truncated Mixture of Continuous
  Distributional Quantile Critics.
\newblock In III, H.~D.; and Singh, A., eds., \emph{Proceedings of the 37th
  International Conference on Machine Learning}, volume 119 of
  \emph{Proceedings of Machine Learning Research}, 5556--5566. PMLR.

\bibitem[{Laffont and Martimort(2002)}]{laffont}
Laffont, J.-J.; and Martimort, D. 2002.
\newblock \emph{The Theory of Incentives: The Principal-Agent Model}.
\newblock Princeton University Press.
\newblock ISBN 9780691091846.

\bibitem[{Lal(2011)}]{LAL2011}
Lal, R. 2011.
\newblock Sequestering carbon in soils of agro-ecosystems.
\newblock \emph{Food Policy}, 36: S33--S39.

\bibitem[{Lal(2015)}]{Lal_2015}
Lal, R. 2015.
\newblock Soil Carbon Sequestration in Agroecosystems of India.
\newblock \emph{Journal of the Indian Society of Soil Science}, 63(2).

\bibitem[{Lowder et~al.(2025)Lowder, Arslan, Cabrera~Cevallos, O'Neill, and
  De~La O~Campos}]{lowder_global_2025}
Lowder, S.; Arslan, A.; Cabrera~Cevallos, C.~E.; O'Neill, M.; and De~La
  O~Campos, A.~P. 2025.
\newblock A global update on the number of farms, farm size and farmland
  distribution – {Background} paper for {The} {State} of {Food} and
  {Agriculture} 2025.
\newblock {FAO} {Agricultural} {Development} {Economics} {Working} {Paper}
  25-14, FAO, Rome.

\bibitem[{MacKenzie, Ohndorf, and Palmer(2012)}]{ian_2012}
MacKenzie, I.~A.; Ohndorf, M.; and Palmer, C. 2012.
\newblock Enforcement-proof contracts with moral hazard in precaution: ensuring
  'permanence' in carbon sequestration.
\newblock \emph{Oxford Economic Papers}, 64(2): 350--374.

\bibitem[{Priyanka et~al.(2025)Priyanka, Charan, Suresh, Sunkara, Patil, Sagar,
  Trivedi, Soumya, Paul, Hadimani, Babu, Trivedi, and
  Narahari}]{priyanka2025carbonfarming}
Priyanka, V.; Charan, G.; Suresh, R.~P.; Sunkara, T.; Patil, M.; Sagar, K.;
  Trivedi, A.; Soumya, K.; Paul, S.; Hadimani, P.; Babu, G.; Trivedi, R.; and
  Narahari, Y. 2025.
\newblock Carbon {{Farming}}: {{An Expository}}, {{Inter-Disciplinary Survey}}.
\newblock \emph{Journal of the Indian Institute of Science}, 105(2-3):
  337--399.

\bibitem[{Raffin et~al.(2021)Raffin, Hill, Gleave, Kanervisto, Ernestus, and
  Dormann}]{stable-baselines3}
Raffin, A.; Hill, A.; Gleave, A.; Kanervisto, A.; Ernestus, M.; and Dormann, N.
  2021.
\newblock Stable-Baselines3: Reliable Reinforcement Learning Implementations.
\newblock \emph{Journal of Machine Learning Research}, 22(268): 1--8.

\bibitem[{Raina, Zavalloni, and Viaggi(2024)}]{Raina2024}
Raina, N.; Zavalloni, M.; and Viaggi, D. 2024.
\newblock Incentive mechanisms of carbon farming contracts: A systematic
  mapping study.
\newblock \emph{Journal of Environmental Management}, 352: 120126.

\bibitem[{Ricciardi et~al.(2018)Ricciardi, Ramankutty, Mehrabi, Jarvis, and
  Chookolingo}]{Ricciardi}
Ricciardi, V.; Ramankutty, N.; Mehrabi, Z.; Jarvis, L.; and Chookolingo, B.
  2018.
\newblock How much of the world's food do smallholders produce?
\newblock \emph{Global Food Security}, 17: 64--72.

\bibitem[{Scheid et~al.(2025)Scheid, Boursier, Durmus, Moulines, and
  Jordan}]{scheid2025}
Scheid, A.; Boursier, E.; Durmus, A.; Moulines, E.; and Jordan, M.~I. 2025.
\newblock Online Decision-Making in Tree-Like Multi-Agent Games with Transfers.
\newblock arXiv:2501.19388.

\bibitem[{Schulman et~al.(2017)Schulman, Wolski, Dhariwal, Radford, and
  Klimov}]{ppo}
Schulman, J.; Wolski, F.; Dhariwal, P.; Radford, A.; and Klimov, O. 2017.
\newblock Proximal Policy Optimization Algorithms.
\newblock arXiv:1707.06347.

\bibitem[{Stewart et~al.(2007)Stewart, Paustian, Conant, Plante, and
  Six}]{stewart2007}
Stewart, C.; Paustian, K.; Conant, R.; Plante, A.; and Six, J. 2007.
\newblock Soil carbon saturation: Concept, evidence and evaluation.
\newblock \emph{Biogeochemistry}, 86: 19--31.

\bibitem[{Tyagi and Haritash(2025)}]{agroparam3}
Tyagi, A.; and Haritash, A.~K. 2025.
\newblock Climate-smart agriculture, enhanced agroproduction, and carbon
  sequestration potential of agroecosystems in India: a meta-analysis.
\newblock \emph{Journal of Environmental Studies and Sciences}, 15: 167--185.

\bibitem[{Wang et~al.(2024)Wang, Li, Zha, and Wang}]{wang_2024}
Wang, H.; Li, W.; Zha, H.; and Wang, B. 2024.
\newblock Carbon Market Simulation with Adaptive Mechanism Design.
\newblock In Larson, K., ed., \emph{Proceedings of the Thirty-Third
  International Joint Conference on Artificial Intelligence, {IJCAI-24}},
  8824--8828. International Joint Conferences on Artificial Intelligence
  Organization.
\newblock Demo Track.

\bibitem[{Welsh, Grover, and Jaimungal(2026)}]{welsh2026}
Welsh, L.; Grover, U.; and Jaimungal, S. 2026.
\newblock Multi-Agent Reinforcement Learning for Greenhouse Gas Offset Credit
  Markets.
\newblock arXiv:2504.11258.

\bibitem[{{World Bank}(2021)}]{agroparam4}
{World Bank}. 2021.
\newblock \emph{Soil Organic Carbon MRV Sourcebook for Agricultural
  Landscapes}.
\newblock Washington, DC: World Bank.

\bibitem[{Wu et~al.(2024)Wu, Chen, Wang, Wang, and Xu}]{wu2024}
Wu, J.; Chen, S.; Wang, M.; Wang, H.; and Xu, H. 2024.
\newblock Contractual Reinforcement Learning: Pulling Arms with Invisible
  Hands.
\newblock arXiv:2407.01458.

\end{thebibliography}

\appendix
\setcounter{proposition}{0}
\setcounter{secnumdepth}{1} 
\section{IR Constraint}

\begin{proposition}
At any solution to the aggregator's problem in which the adoption
constraint is slack, the Individual Rationality constraint binds:
equilibrium farmer utility satisfies $U_f^* = u_0$.
\end{proposition}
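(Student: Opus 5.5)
We argue by contradiction, using a small perturbation of the contract that the aggregator strictly prefers. Fix a solution $(\mathbf{a},\alpha)$ to the aggregator's problem in the single-season game. At this solution the farmer adopts practice set $S=\{i: y_i-k_i+a_i\ge 0\}$, accepts, and the adoption constraint is slack, i.e.\ $y_i-k_i+a_i>0$ for every $i\in S$.

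Suppose, towards a contradiction, that IR is slack, so $U_f(S)=\sum_{i\in S}(y_i-k_i+a_i)f-\alpha m>u_0$. Let $\varepsilon>0$ denote the IR slack $U_f(S)-u_0$. We then construct a perturbed contract in one of two ways.

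\emph{First route: lower action payments.} If some $i\in S$ has $a_i>0$, we lower $a_i$ by a small $\eta>0$. We need $\eta<a_i$ for nonnegativity, $\eta<y_i-k_i+a_i$ so practice $i$ is still adopted, and $\eta f<\varepsilon$ so IR still holds. All three hold for small $\eta$ because every constraint is strict. Practices outside $S$ are untouched, so the adopted set remains $S$. The aggregator's utility $U_A(S)$ rises by $\eta f>0$.

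\emph{Second route: raise the MRV share.} If all $a_i=0$ for $i\in S$, we instead raise $\alpha$ by $\eta>0$. This requires $\alpha<1$ and $m>0$. It leaves $S$ unchanged, since $S$ does not depend on $\alpha$. It keeps IR for $\eta m<\varepsilon$, and it raises $U_A$ by $\eta m>0$.

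Either route contradicts optimality, so IR binds and $U_f^*=u_0$. Substituting the binding IR constraint then gives Eq.~\ref{eqn:aggregator-surplus}.

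The main obstacle is the corner case in which neither lever is available: all $a_i=0$ on $S$ and either $\alpha=1$ or $m=0$.
\begin{itemize}
\item If $a_i=0$ for all $i\in S$, then $S\subseteq S_0$.
\item Conversely, any $i\in S_0$ satisfies $y_i-k_i+a_i\ge 0$, so $S_0\subseteq S$. Hence $S=S_0$.
\item Then $U_f(S)=u_0-\alpha m\le u_0$, contradicting the assumed IR slack.
\end{itemize}
So this case cannot occur. I would present this as the closing step, and also note the tie-breaking convention (acceptance at equality in favour of the principal), which ensures the perturbed contract is still accepted at the boundary.
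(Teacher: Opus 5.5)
Your proof is correct and follows the paper's argument. You first rule out the case with no positive payment on $S$, since then $S=S_0$ and $U_f=u_0-\alpha m\le u_0$; you then lower some positive $a_i$ by an amount bounded by the adoption slack, by $a_i$ itself, and by the IR slack divided by $f$. Your second route (raising $\alpha$) is never needed, because your closing argument shows that $a_i=0$ on all of $S$ already contradicts IR slack whatever $\alpha$ and $m$ are. Phrasing that case as ``$a_i=0$ for all $i\in S$'' instead of the paper's ``$\mathbf{a}^*=\mathbf{0}$'' is slightly more careful, since it also covers contracts that pay only on non-adopted practices.
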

\begin{proof}
Suppose $U_f > u_0$ at an optimum $(\mathbf{a}^*, \alpha^*)$ with a
participating farmer. If $\mathbf{a}^* = \mathbf{0}$ then $S = S_0$, so
$U_f = u_0 -\alpha^* m \leq u_0$ by Eq.~2, a contradiction. Hence
$a_i^* > 0$ for some $i \in S$. Let
$\eta = \min_{i \in S} (y_i - k_i + a_i^*) > 0$ denote the slack in the
adoption constraint and reduce $a_i^*$ by
$\varepsilon < \min\{\eta,\, a_i^*,\, (U_f - u_0)/f\}$. The first bound
preserves $S$, the second keeps payments non-negative, and the third
preserves $U_f \geq u_0$. Since $U_A$ is strictly decreasing in $a_i$,
this strictly increases the aggregator's payoff, contradicting
optimality.
\end{proof}

\section{Equilibrium under Additionality}
\label{app:equilibrium_additionality}
Under an additionality constraint, the aggregator earns carbon credits only for practices not adopted in the farmer's outside option ($S \setminus S_0$), and only offers action payments for the same. The binding IR constraint reduces to:
\begin{equation}
    f \sum_{i \in S \setminus S_0} a_i
    = \alpha \cdot m - f \sum_{i \in S \setminus S_0} (y_i - k_i)
\end{equation}
and the aggregator's equilibrium utility is:
\begin{equation}
    U_A^* = P \cdot f \sum_{i \in S \setminus S_0} c_i
          + f \sum_{i \in S \setminus S_0} (y_i - k_i) - m
\end{equation}
The problem reduces entirely to contracting over additional practices; the outside option determines $S_0$ but drops out of the surplus calculation.

Comparing this to the unconstrained equilibrium utility in Eq.~5 gives
\begin{equation}
    \label{eqn:u_add}
    U_A^{*,\text{no-add}} - U_A^{*,\text{add}}
    = P \cdot f \sum_{i \in S_0} c_i
    \;\geq\; 0
\end{equation}
where non-negativity follows from $c_i \geq 0$. The inequality is
strict whenever $\sum_{i \in S_0} c_i > 0$, that is, whenever the
outside option already contains at least one practice with positive
sequestration. Since every practice in our setting satisfies
$c_i > 0$, this reduces to $S_0 \neq \emptyset$.

Additionality therefore reduces aggregator surplus by the carbon value of
outside-option practices while leaving the per-farmer cost $m(f)$ unchanged.
Since viability trades per-hectare surplus against a cost with a fixed
component, the smallest farm size at which the aggregator breaks even is
weakly larger under additionality.

\begin{proposition}
Let $f^*$ denote the smallest farm size at which $U_A^* \geq 0$, under
$m(f) = m_0 + m_1 f^{\delta}$ with $\delta < 1$. Then
$f^*_{\mathrm{add}} \geq f^*_{\mathrm{no\text{-}add}}$, with strict
inequality whenever $\sum_{i \in S_0} c_i > 0$.
\end{proposition}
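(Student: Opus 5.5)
Fix the practice sets $S$ and $S_0$. These are per-hectare objects that do not depend on $f$: both are defined by per-hectare comparisons such as $y_i-k_i\ge 0$. Now view the two equilibrium utilities as functions of farm size.

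By Eq.~5 and the additionality equilibrium derived above, both have the form
\begin{equation*}
U_A^{*,\text{no-add}}(f) = f\,g_{\text{no-add}} - m(f), \qquad U_A^{*,\text{add}}(f) = f\,g_{\text{add}} - m(f),
\end{equation*}
where $g_{\text{no-add}}$ and $g_{\text{add}}$ are the constant per-hectare surpluses. Eq.~\ref{eqn:u_add} gives
\begin{equation*}
g_{\text{no-add}} - g_{\text{add}} = P\sum_{i\in S_0} c_i \ge 0,
\end{equation*}
with strict inequality whenever $\sum_{i\in S_0}c_i>0$. The whole argument is then a comparison of the zero sets of $h_g(f)=fg-m_0-m_1f^{\delta}$ for two values of $g$.

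\textbf{Step 1 (pointwise dominance).} For every $f>0$ we have $U_A^{*,\text{add}}(f)\le U_A^{*,\text{no-add}}(f)$, since the two differ by $fP\sum_{i\in S_0}c_i\ge0$. Hence the set $\{f: U_A^{*,\text{add}}(f)\ge 0\}$ is contained in $\{f: U_A^{*,\text{no-add}}(f)\ge 0\}$. Taking infima over these sets gives $f^*_{\mathrm{add}}\ge f^*_{\mathrm{no\text{-}add}}$ immediately. If the additionality set is empty, set $f^*_{\mathrm{add}}=+\infty$ by convention and the inequality still holds. This settles the weak inequality without using $\delta<1$.

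\textbf{Step 2 (structure of the break-even set).} For strictness I need $f^*_{\mathrm{no\text{-}add}}$ to be a genuine crossing point rather than, say, an endpoint of the domain. I will use $\delta<1$ and $m_0>0$.

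For $g>0$, the function $h_g$ is strictly convex when $0<\delta<1$, because $-m_1f^\delta$ is strictly convex. Moreover $h_g(0^+)=-m_0<0$ and $h_g(f)\to+\infty$ as $f\to\infty$. Therefore $\{h_g\ge0\}=[f^*,\infty)$, where $f^*$ is the unique root, with $h_g(f^*)=0$ and $h_g'(f^*)>0$.

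If $g\le0$, then $h_g<0$ everywhere and $f^*=\infty$. The boundary case $\delta\le0$ is handled the same way, since $h_g$ is then increasing.

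\textbf{Step 3 (strictness).} Suppose $\sum_{i\in S_0}c_i>0$ and $f^*_{\mathrm{no\text{-}add}}<\infty$. Then
\begin{equation*}
h_{g_{\text{add}}}(f^*_{\mathrm{no\text{-}add}}) = 0 - f^*_{\mathrm{no\text{-}add}}P\sum_{i\in S_0}c_i < 0.
\end{equation*}
By Step~2, $h_{g_{\text{add}}}$ is negative on $(0,f^*_{\mathrm{add}})$, so $f^*_{\mathrm{no\text{-}add}}$ lies strictly below $f^*_{\mathrm{add}}$.

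If instead $f^*_{\mathrm{no\text{-}add}}=\infty$, then both thresholds are infinite and no farm is viable. I will note this degenerate case as excluded, or interpret strictness as holding whenever a finite threshold exists.

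\textbf{Main obstacle.} The delicate step is Step~2: guaranteeing that the viable set is an up-ray, so that $f^*$ is a well-defined minimum at which $U_A^*=0$ exactly. I would first try the convexity argument given there. If it does not go through cleanly, a fallback is to write the condition as $g\ge m(f)/f = m_0/f + m_1f^{\delta-1}$. The right-hand side is strictly decreasing in $f$ because $\delta<1$, which gives the threshold structure directly. This fallback also makes transparent why the fixed component $m_0$ and the condition $\delta<1$ are what create a minimum viable farm size.
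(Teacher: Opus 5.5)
Your Step 1 is the paper's proof: the wedge $fP\sum_{i\in S_0}c_i\ge 0$ gives $U_A^{*,\mathrm{add}}(f)\le U_A^{*,\mathrm{no\text{-}add}}(f)$ for every $f$, so every farm size viable under additionality is also viable without it, and the weak inequality follows. The paper stops there and reads strictness directly off strict pointwise dominance, so your Steps 2--3 usefully make explicit what strictness actually requires: $f^*_{\mathrm{no\text{-}add}}$ must be an interior zero of $U_A^{*,\mathrm{no\text{-}add}}$ (this is where $\delta<1$ and $m_0>0$ enter), and some farm size must be viable at all; one small correction is that in Step 3 the fact doing the work is $\{h_{g_{\mathrm{add}}}\ge 0\}=[f^*_{\mathrm{add}},\infty)$, or simply Step 1 together with $h_{g_{\mathrm{add}}}(f^*_{\mathrm{no\text{-}add}})<0$, rather than negativity of $h_{g_{\mathrm{add}}}$ on $(0,f^*_{\mathrm{add}})$.
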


\begin{proof}
Immediate from Eq.~\ref{eqn:u_add}. The wedge
$P \sum_{i \in S_0} c_i$ lowers per-hectare surplus while leaving $m(f)$
unchanged, so $U_A^{*,\mathrm{add}}(f) \leq U_A^{*,\mathrm{no\text{-}add}}(f)$
for every $f$, and strictly so when $\sum_{i \in S_0} c_i > 0$. Every farm
size viable under additionality is therefore viable without it.
\end{proof}

An additionality requirement thus raises the minimum viable farm size and
strengthens the exclusion gradient reported in the main text. Our adoption
results should be read as favourable to smallholder participation relative
to a setting with additionality enforced.

\section{Algorithm Comparison}
\begin{table}[htbp]
\centering
\caption{Cumulative profit-capture ratio: TQC vs SAC (mean across 5 seeds), with the relative improvement of TQC over SAC.}
\label{tab:tqc_vs_sac}
\begin{tabular}{lccc}
\toprule
\textbf{Ablation} & \textbf{TQC} & \textbf{SAC} & \textbf{Improvement (\%)} \\
\midrule
MH+AS & 0.590 & 0.561 & +5.1 \\
MH only & 0.679 & 0.682 & -0.4 \\
AS only & 0.555 & 0.519 & +6.9 \\
Neither & 0.807 & 0.783 & +3.0 \\
\bottomrule
\end{tabular}
\end{table}

TQC and SAC were trained with the final hyperparameters as reported in Table~\ref{tab:hyperparams}.
\section{Model Training}
\label{app:model_training}
Training TQC was done via Stable-Baselines3~\citep{stable-baselines3}. Training proceeds through a 4-phase curriculum (Table~\ref{tab:curriculum}) that progressively lengthens the season horizon. The entropy coefficient, set to "auto",  is reset in each phase to encourage exploration. All experiments are repeated independently over 25 random seeds.

In each run, the model is trained and evaluated using the same random seed, and reported results are aggregated across the 25 runs. During training, a callback evaluates the current policy every 500 gradient steps by running a deterministic episode on a fixed evaluation seed and records the cumulative reward. The checkpoint is saved whenever this reward exceeds the previous best, so all reported results use the best-performing checkpoint rather than the final iterate, which avoids performance degradation from late-training instability. Selecting a checkpoint by its score on a single noise realization and then reporting that same score would bias the reported value upward, since the selected checkpoint is the one whose score benefited most from that particular draw. We therefore separate selection from reporting. Every reported number comes from re-evaluating the selected checkpoint on an independent held-out noise draw, holding the farmer population fixed, so the reported values are free of selection bias.

\begin{table}[htbp]
\centering
\begin{tabular}{cccc}
\toprule
Phase & Timesteps & Seasons & Rejection penalty \\
\midrule
1 & 25,000  & 1 & 0.3 \\
2 & 25,000  & 3 & 0.1 \\
3 & 25,000  & 5 & 0.1 \\
4 & 100,000 & 5 & 0.0 \\
\bottomrule
\end{tabular}
\caption{Four-phase training curriculum: horizon and rejection penalty schedule.}
\label{tab:curriculum}
\end{table}

\begin{table}[htbp]
\centering
\setlength{\tabcolsep}{1mm}
\begin{tabular}{ll}
\toprule
Hyperparameter & Value \\
\midrule
Algorithm & TQC (Truncated Quantile Critics)\\
Network (actor, critic) & $(256, 256, 256)$ \\
Learning rate & $3\times10^{-4}$ \\
Batch size & 1024 \\
Buffer size & 500,000 \\
Discount factor $\gamma$ & 1.0 (undiscounted) \\
Entropy coefficient & auto (self-tuning, reset per phase) \\
Learning Starts     & 5000 \\
Parallel environments & 24 \\
Gradient steps & 2 \\
Training timesteps & 175k \\
\bottomrule
\end{tabular}
\caption{TQC hyperparameters.}
\label{tab:hyperparams}
\end{table}

During training, the aggregator's reward included an additional rejection
penalty term, subtracted per farmer who did not adopt, following the schedule
in Table~\ref{tab:curriculum}. This penalty is reduced to zero by the final
training phase, so all reported results reflect the aggregator's true
objective (Eq.17). If any additional details regarding the training setup are required, please refer to the Supplementary code.

\section{Agronomic Parameters}

\begin{table}[htbp]
\centering
\setlength{\tabcolsep}{1mm}
\caption{Key environment parameters}
\label{tab:parameter_settings}
\begin{tabular}{lrl}
\toprule
\textbf{Parameter} & \textbf{Value} & \textbf{Symbol} \\
\midrule
Carbon market price & \$60/tCO\textsubscript{2}e & $P$ \\
Result-based payment  & $[0, P]$ & R \\
Action payment max & \$100/ha & $a_{\max}$ \\
Initial SOC range & $\sim U[30, 80]$ tCO\textsubscript{2}e/ha & $C_j^0$ \\
Maximum SOC       & $200$ tCO\textsubscript{2}e/ha & $C_m$ \\
Carbon observation noise & 0.5 tCO\textsubscript{2}e & $\sigma$ \\
Crop selling price & \$0.27/kg & $w$ \\
MRV fixed cost & \$300 & $m_0$ \\
MRV variable cost & \$80 & $m_1$ \\
MRV scaling exponent & 0.7 & $\delta$ \\
Number of farmers  & 100 & $N$ \\
\bottomrule
\end{tabular}
\end{table}
The agronomic parameters in Table~\ref{tab:csp_practices} are based on the agricultural literature \citep{agroparam1,agroparam2, agroparam3, agroparam4} and have been validated by domain experts. They are intended as a plausible instantiation for one regional setting, i.e, Indian paddy farmers, rather than a definitive calibration.
SOC parameters are based on \citet{Lal_2015}.

\begin{table}[htbp]
    \centering
    \setlength{\tabcolsep}{1mm}
    \caption{Carbon sequestration practices with associated parameters.
AWD: alternate wetting and drying; INM: integrated nutrient management;
DRV: deep-root variety.}
    \label{tab:csp_practices}
    \renewcommand{\arraystretch}{1.25}
    \begin{tabular}{lccc}
        \toprule
        \textbf{Practice} & ${c}_i$ & ${k}_i$ & ${z}_i$ \\
         & \textbf{(tCO\textsubscript{2}e/ha)} & \textbf{(\$/ha)} & \textbf{(t/ha)} \\
        \midrule
        Crop Residue Retention                 & 1.10 & $-140$ & $+0.15$ \\
        Reduced/Minimum Tillage                  & 0.32 & $-80$  & $-0.10$ \\
        Mid-Season Drainage                      & 0.40 & $-80$  & $-0.10$ \\
        AWD             & 0.55 & $-80$  & $+0.05$ \\
        SRI Water Management                     & 0.60 & $0$    & $+0.30$ \\
        Optimised N Application                  & 0.35 & $+120$ & $+0.10$ \\
        INM           & 0.85 & $0$    & $+0.10$ \\
        Manure Application                       & 0.50 & $+125$ & $+0.12$ \\
        Biochar Application                      & 2.20 & $+400$ & $+0.30$ \\
        Raised Bed Cultivation                   & 1.50 & $+600$ & $+0.45$ \\
        Early-Maturing Variety                   & 0.30 & $-80$  & $0.00$  \\
        High-Biomass/DRV           & 0.70 & $-80$  & $+0.10$ \\
        \bottomrule
    \end{tabular}
\end{table}

Here we define 
\begin{equation}
y_i = 1000 \cdot z_i \cdot w
\end{equation}

\pagebreak

\section{Adoption Rate Change}
 Figure~\ref{fig:adoption_rate_het} illustrates how adoption rates evolve over the seasons, averaged across seeds and farmers. The gradual decline in adoption is driven by soil carbon saturation. As soil carbon accumulates over successive seasons, the marginal amount of additional carbon that can be sequestered decreases, reducing the expected carbon revenue from adopting the practices. Consequently, profit margins decline over time, making it increasingly difficult for the aggregator to learn contracts that are simultaneously attractive to farmers and profitable for the aggregator.

\begin{figure}[htbp]
    \centering
    \includegraphics[width=\columnwidth]{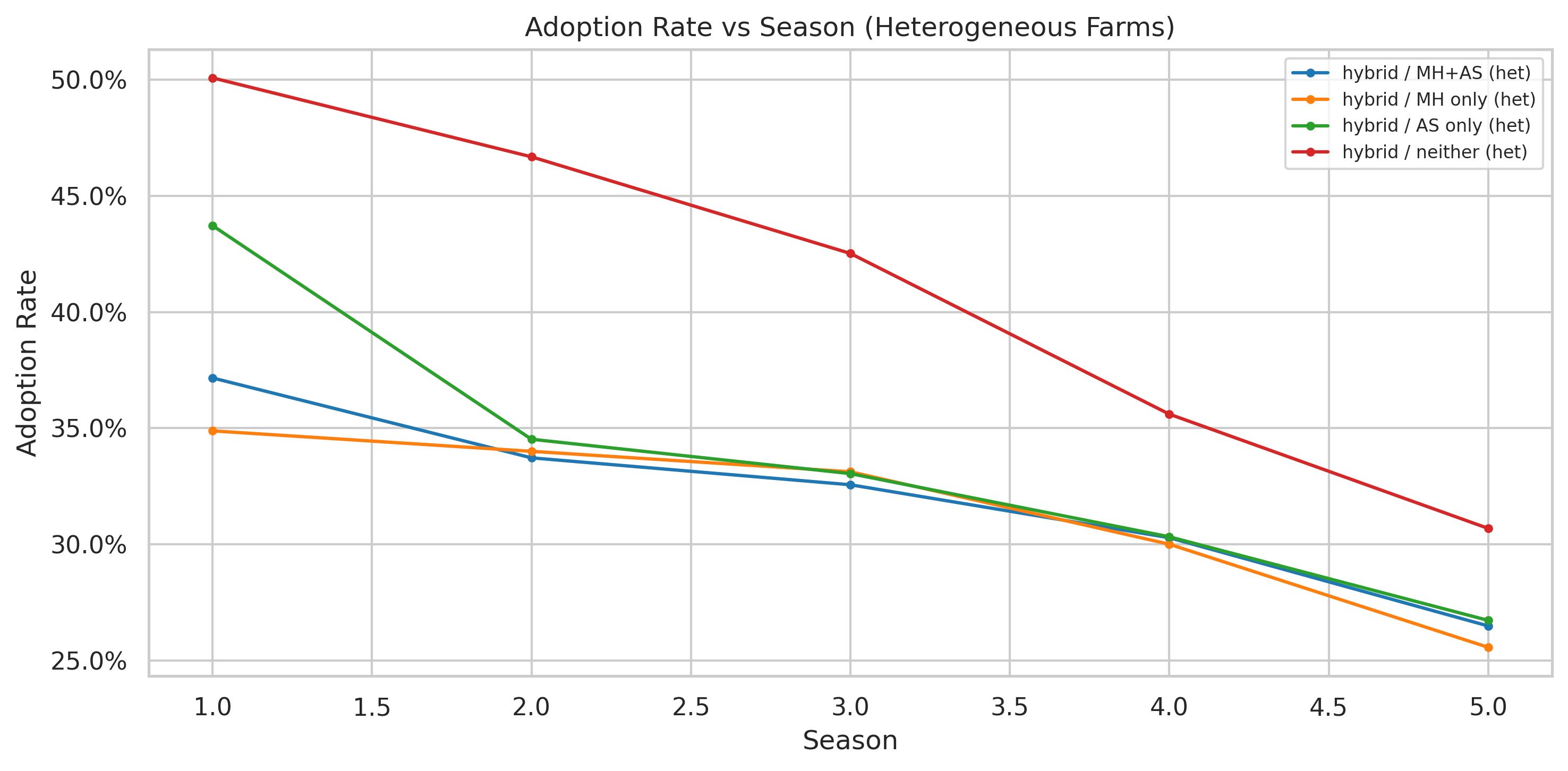}
    \caption{Adoption rates across seasons for the different conditions, variable farm size.}
    \label{fig:adoption_rate_het}
\end{figure}

\section{Sensitivity to the Cost-Limiter Effort Level $\underline{\mathbf{e}}$}
\label{app:e-sensitivity}

The cost-limiter interpretation of moral hazard depends on the effort level at
which the aggregator's per-hectare losses are evaluated. To check that our
conclusions are not an artifact of a particular choice, we re-ran the MH+AS and
AS-only conditions under heterogeneous farm size for
$e \in \{0.1, 0.2, 0.3\}$. Because this is a supplementary check, it uses a
reduced set of ten seeds; the MH+AS and AS-only values and p-value therefore differ slightly
from the 25-seed figures reported in the main text, and the intervals are
correspondingly wider.

Profit-capture is essentially flat in $e$ (Table~\ref{tab:e_sensitivity}), and all three rows of MH+AS exceed the AS-only
value of $0.546$, though the confidence intervals overlap at this sample size. All three paired Wilcoxon tests against AS are significant.

The price channel shows the effect most clearly. Paired by seed, the mean
action payment rate for adopted practices is lower under MH+AS than under
AS only at every effort level (Table~\ref{tab:ap_rate_sensitivity}). The direction is consistent across
the range and the magnitude varies. We do not attempt to attribute that
variation to a specific mechanism on ten seeds. The sweep establishes what
it was designed to, the cost-limiter effect. Moral hazard operating through
price rather than quantity under heterogeneous farm size is not an
artifact of the particular $e$ adopted in the main text.

\begin{table*}[htbp]
\centering
\begin{tabular}{lcccc}
\toprule
\textbf{Experiment} & \textbf{Mean} & \textbf{Std} & \textbf{95\% CI} & \textbf{$p$-value (vs AS only)} \\
\midrule
MH+AS (e=0.1) & \textbf{0.603} & 0.137 & [0.523, 0.691] & 0.037 \\
MH+AS (e=0.2) & \underline{0.603} & \underline{0.126} & [0.527, 0.682] & 0.002 \\
MH+AS (e=0.3) & 0.596 & 0.130 & [0.520, 0.680] &  0.002 \\
AS only (no MH) & 0.546 & \textbf{0.121} & [0.474, 0.622] & ---\\ 
\bottomrule
\end{tabular}
\caption{Profit-capture ratio across cost-limiter effort levels $e$.}
\label{tab:e_sensitivity}
\end{table*}

\begin{table*}[htbp]
\centering
\begin{tabular}{lccccc}
\toprule
\textbf{Experiment} & \textbf{AP Rate} & \textbf{AS only} & \textbf{Diff} & \textbf{95\% CI} & \textbf{$p$-value} \\
\midrule
MH+AS (e=0.1) & \$6.88 & \$8.92 & \$-2.04 & [-3.23, -0.86] & 0.0195 \\
MH+AS (e=0.2) & \$7.41 & \$8.92 & \$-1.51 & [-2.50, -0.64] & 0.0098 \\
MH+AS (e=0.3) & \$8.07 & \$8.92 & \$-0.85 & [-1.64, +0.12] & 0.1309 \\
\bottomrule
\end{tabular}
\caption{Mean action payment rate (\$/ha) for adopted practices: each cost-limiter effort level $e$ against \texttt{AS only}.}
\label{tab:ap_rate_sensitivity}
\end{table*}

\section{Cost Limiter Mechanism}
Consider this example for one isolated practice. Reduced tillage (base cost $-\$80$/ha, yield effect $-100$ kg/ha,
carbon $0.32$ tCO\textsubscript{2}e/ha) offered to a farmer with cost type
$\theta_j = 0.45$ and farm size $7$ ha, in the first season ($t=0$) with soil
at saturation deficit $1 - C_j/C_m = 0.80$, under a contract with
$R = \$10$/tCO\textsubscript{2}e and action payment $\$3$/ha:
\begin{align}
&\text{cost} = k^{\mathrm{eff}}_{ji} + \tfrac{1}{2}|k^{\mathrm{eff}}_{ji}| = -36 + 18 = -\$18/\text{ha} \\
\intertext{(realised cost via Eq.11 at $t=0$; the
type-scaled saving $k^{\mathrm{eff}}_{ji} = -80 \times 0.45 = -36$ is only half-realised early)}
&\text{yield\_revenue} = -100 \times \$0.27 = -\$27/\text{ha} \\
\intertext{(negative yield, felt immediately with no ramp, Eq.10)}
&\text{carbon\_payment} = 0.32 \times 0.80 \times \$10 = \$2.56/\text{ha} \\
\intertext{(practice contribution scaled by the saturation deficit)}
&\text{net} = -27 + 2.56 - (-18) = -\$6.44/\text{ha}
\end{align}
The net loss is therefore \$6.44/ha, making the practice independently unprofitable.
\begin{align}
\begin{split}
&\text{With MH } (e=0.2): \\
&\quad (0.2 \times (-6.44) + 3.00) \times 7 = \$11.98
\end{split} \\
\intertext{(scaled practice loss is only \$1.29/ha, which the \$3/ha action
payment more than covers, yielding $+\$1.71$/ha)}
\begin{split}
&\text{Without MH } (e=1.0): \\
&\quad (1.0 \times (-6.44) + 3.00) \times 7 = -\$24.08
\end{split}
\end{align}
Without MH, the full \$6.44/ha loss exceeds the \$3 AP, so the farmer does not
adopt.

\section{MRV Counterfactual}

In the counterfactual regime we remove the fixed per-farmer MRV component
($m_0 = 0$) and set the scaling exponent to $\delta = 1$, so that MRV cost is
strictly proportional to cultivated area and cost per hectare is constant
across farm sizes. To ensure that the counterfactual isolates the effect of
cost \emph{structure} rather than cost \emph{level}, we recalibrate the
variable rate $m_1$ so that total MRV liability over the farmer population is
unchanged from the baseline schedule.

Writing the baseline schedule as $m(f) = m_0 + m_1 f^{\delta}$ with
$m_0 = 300$, $m_1 = 80$, $\delta = 0.7$, and the counterfactual schedule as
$m'(f) = c\,f$, the neutrality condition over $N$ farms with areas $f_j$ is
\begin{equation}
\sum_{j=1}^{N} c\,f_j \;=\; \sum_{j=1}^{N}\left(m_0 + m_1 f_j^{\delta}\right),
\end{equation}
which solves to
\begin{equation}
c \;=\; \frac{m_0 N + m_1 \sum_j f_j^{\delta}}{\sum_j f_j}
\;=\; \frac{m_0 + m_1\,\mathbb{E}\!\left[f^{\delta}\right]}{\mathbb{E}[f]}.
\end{equation}
Expressing $c$ through the moments of the farm-size distribution rather than a
particular sample makes it a property of the population, so a single constant
can be held fixed across all seeds.

For $f \sim \mathrm{Uniform}[1,10]$ we have $\mathbb{E}[f] = 5.5$ and
\begin{equation}
\mathbb{E}\!\left[f^{0.7}\right]
= \frac{1}{9}\int_{1}^{10} f^{0.7}\,\mathrm{d}f
= \frac{10^{1.7} - 1}{1.7 \times 9}
\approx 3.210 .
\end{equation}
Substituting,
\begin{equation}
c = \frac{300 + 80(3.210)}{5.5} = \frac{556.8}{5.5} \approx 101.3 \text{ (rounding up)},
\end{equation}
that is, average baseline cost per farm divided by average area per farm. We
therefore set $m_1 = 101.3$ in the counterfactual.

Two features of the counterfactual deserve discussion. First, removing $m_0$ and setting $\delta = 1$ are not independent changes
but jointly define a single object. Both terms must be neutralised for it to be
size-invariant. Setting $m_0 = 0$ alone leaves a schedule still declining in $f$, and
the scale economy the counterfactual is meant to measure would survive in the control
condition.

Second, as seen in tables~\ref{tab:mrv_enrollment} and ~\ref{tab:mrv_enrollment_no_fixed}, realised MRV expenditure rises, but through enrolment rather than price. The results reported in these two tables are cumulative across the 5 seasons.
Total expected liability is equalised by construction, and cost per \emph{enrolled}
farmer falls even as the number of enrolled farmers grows: total expenditure is an
equilibrium outcome of the aggregator's contracting problem, not a primitive of the
schedule, and its increase is the mechanical consequence of covering farms the baseline
excluded. Reported cost per hectare is affected the same way: under $m'$ it equals $c$ whatever the
composition of the enrolled pool, whereas under $m$ it is an area-weighted average of a
per-hectare schedule that declines in $f$, so concentrating enrollment among larger farms
lowers it mechanically. The gap between the two regimes therefore reflects who enrols
rather than what MRV costs.

\begin{table*}[htbp]
\centering
\begin{tabular}{lcc}
\toprule
\textbf{Metric} & \textbf{First-Best} & \textbf{RL} \\
\midrule
Farmers enrolled & 295 [285, 305] & 160 [146, 175] \\
Land enrolled (ha) & 2010 [1943, 2084] & 1302 [1189, 1420] \\
Total MRV spend & \$177984 [\$172091, \$184220] & \$103426 [\$94201, \$112996] \\
MRV / farmer & \$604 [\$600, \$607] & \$646 [\$642, \$650] \\
MRV / ha & \$89 [\$88, \$89] & \$79 [\$79, \$80] \\
Agg. total profit & -- & \$84865 [\$78967, \$90747] \\
\bottomrule
\end{tabular}
\caption{Cumulative MRV expenditure and enrollment (MH+AS), $m_0=300, m_1=80, \delta=0.7$  (95\% CI).}
\label{tab:mrv_enrollment}
\end{table*}

\begin{table*}[htbp]
\centering
\begin{tabular}{lcc}
\toprule
\textbf{Metric} & \textbf{First-Best} & \textbf{RL} \\
\midrule
Farmers enrolled & 359 [350, 369] & 283 [242, 326] \\
Land enrolled (ha) & 1979 [1911, 2054] & 1568 [1337, 1803] \\
Total MRV spend & \$200498 [\$193326, \$208020] & \$158866 [\$135017, \$183366] \\
MRV / farmer & \$558 [\$546, \$570] & \$560 [\$550, \$570] \\
MRV / ha & \$101 [\$101, \$101] & \$101 [\$101, \$101] \\
Agg. total profit & -- & \$76292 [\$68227, \$84139] \\
\bottomrule
\end{tabular}
\caption{Cumulative MRV expenditure and enrollment (MH+AS), $m_0=0, m_1=101.3, \delta=1.0$ (95\% CI).}
\label{tab:mrv_enrollment_no_fixed}
\end{table*}

\section{Efficiency Breakdowns}

For completeness we report how welfare divides between the two parties under
each ablation, for the fixed farm-size setting (Table~\ref{tab:decomposition})
and the heterogeneous setting (Table~\ref{tab:decomposition_het}). Each entry
is expressed as a fraction of first-best welfare, so the three columns of a row
sum to one. Farmer surplus is the aggregate net payoff to adopting
farmers. Aggregator profit is as defined in the main text. Deadweight
loss (DWL) is the residual: the share of first-best welfare the learned contract
fails to realise, arising from factors such as the pooling constraint and the information frictions.

\section{Contract Payments}

Figure~\ref{fig:payment_structure_het} shows mean carbon and action payments as fractions of revenue. Action payments are favored over result based payments across all ablations. This may be due to allowing greater freedom to the aggregator to incentivise specific actions, however exact characterisation of this behaviour is left to future work.
\begin{table*}[htbp]
\centering
\begin{tabular}{lccc}
\toprule
\textbf{Ablation} & \textbf{Farmer Surplus} & \textbf{Aggregator Profit} & \textbf{DWL} \\
\midrule
MH+AS & 0.154 [0.138, 0.169] & 0.599 [0.537, 0.656] & 0.247 [0.178, 0.320] \\
MH only & 0.103 [0.091, 0.114] & 0.699 [0.639, 0.756] & 0.198 [0.135, 0.267] \\
AS only & 0.175 [0.161, 0.188] & 0.566 [0.521, 0.607] & 0.259 [0.208, 0.313] \\
Neither & 0.121 [0.108, 0.133] & 0.809 [0.761, 0.852] & 0.071 [0.020, 0.123] \\
\bottomrule
\end{tabular}
\caption{Profit-capture decomposition, fixed farm size (95\% CI).}
\label{tab:decomposition}
\end{table*}

\begin{table*}[htbp]
\centering
\begin{tabular}{lccc}
\toprule
\textbf{Ablation} & \textbf{Farmer Surplus} & \textbf{Aggregator Profit} & \textbf{DWL} \\
\midrule
MH+AS & 0.156 [0.142, 0.171] & 0.620 [0.577, 0.662] & 0.224 [0.170, 0.277] \\
MH only & 0.131 [0.119, 0.142] & 0.612 [0.574, 0.651] & 0.257 [0.212, 0.303] \\
AS only & 0.173 [0.160, 0.187] & 0.568 [0.529, 0.609] & 0.258 [0.207, 0.308] \\
Neither & 0.164 [0.151, 0.177] & 0.680 [0.641, 0.719] & 0.156 [0.105, 0.205] \\
\bottomrule
\end{tabular}
\caption{Profit-capture decomposition, variable farm size (95\% CI).}
\label{tab:decomposition_het}
\end{table*}

\begin{figure}[htbp]
    \centering
    \includegraphics[width=\columnwidth]{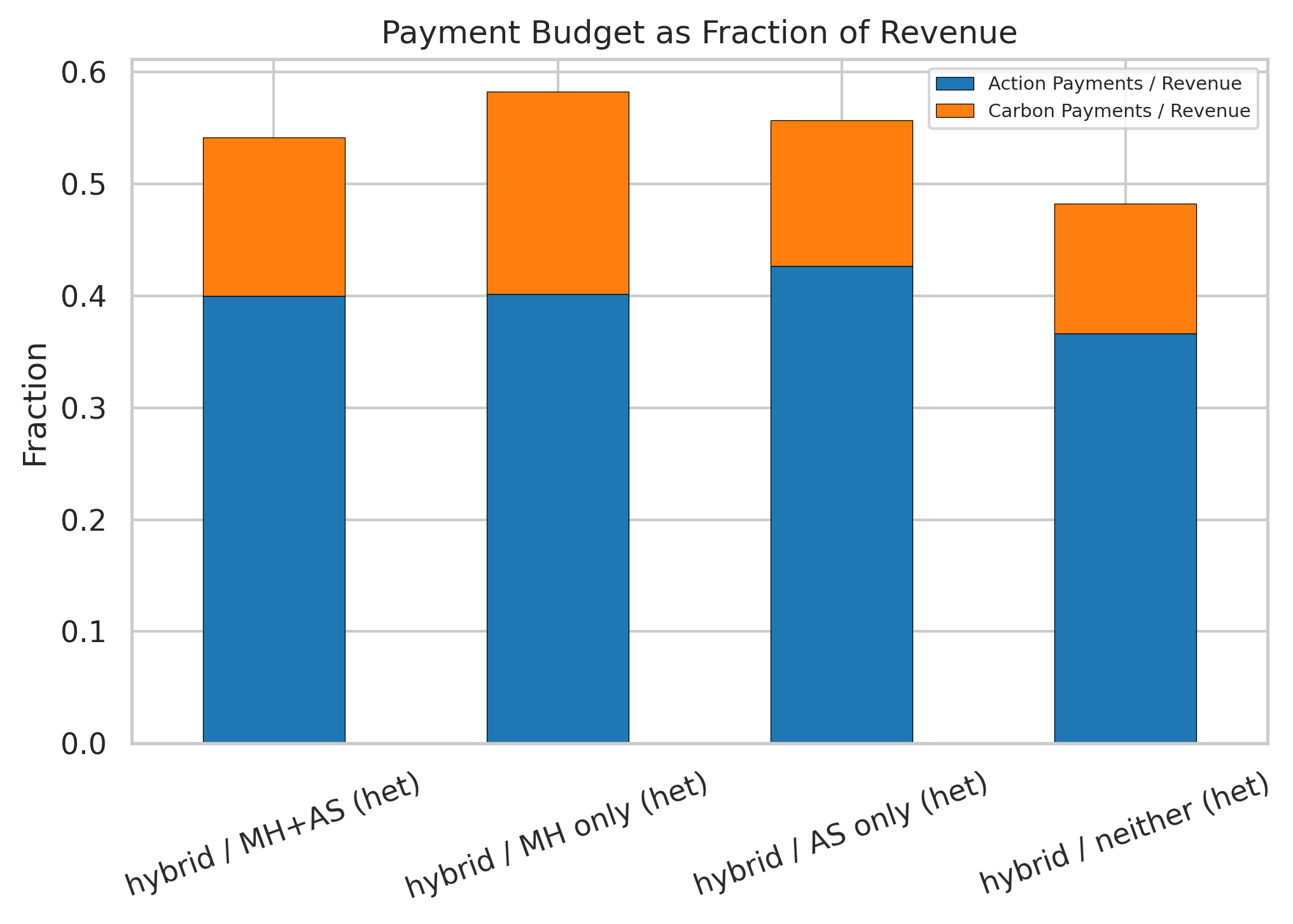}
    \caption{Contract payment structure, variable farm size.}
    \label{fig:payment_structure_het}
\end{figure}

\end{document}